\documentclass{article}
\usepackage{newpxtext,newpxmath}
\usepackage{PRIMEarxiv}

\usepackage[utf8]{inputenc} 
\usepackage[T1]{fontenc}    
\usepackage{hyperref}       
\usepackage{url}            
\usepackage{booktabs}       
\usepackage{amsfonts}       
\usepackage{nicefrac}       
\usepackage{microtype}      
\usepackage{lipsum}
\usepackage{fancyhdr}       
\usepackage{graphicx}       
\graphicspath{{media/}}     
\usepackage{subcaption}
\usepackage{tabularx}
\usepackage{amsmath, amssymb, amsthm}
\usepackage{tikz}
\usetikzlibrary{arrows.meta, positioning, calc}
\usepackage{xcolor}
\newtheorem{prop}{Proposition}
\usepackage[most]{tcolorbox}
\definecolor{cvnnNavy}{RGB}{14,35,68}
\definecolor{cvnnPurple}{RGB}{103,48,166}
\definecolor{cvnnBlue}{RGB}{31,91,170}
\definecolor{cvnnGreen}{RGB}{45,135,70}
\definecolor{cvnnOrange}{RGB}{220,105,35}
\definecolor{cvnnRed}{RGB}{190,45,30}
\definecolor{cvnnGray}{RGB}{150,150,150}

\title{ When do complex-valued neural networks help? \\
A study of representation, geometry, and optimization
}

\author{
  Ashutosh Kumar \\
  Owl Autonomous Imaging, Inc. \thanks{Work done outside of Owl AI}\\
  \texttt{ak1825@rit.edu}
}

\begin{document}
\maketitle

\begin{abstract}
Complex-valued Neural Networks (CVNNs) are particularly well-suited to domains where information is inherently encoded in both magnitude and phase, making complex-valued representations a natural choice for modeling. However, a complex-valued input alone does not explain when complex arithmetic should improve the model learning. The label signal may reside in amplitude, phase, their coupling, or a symmetry that a real-valued model could also represent under the right coordinate system. This paper studies precisely this question through a representation-first approach towards evaluating CVNNs against Cartesian real, polar, phase-only, magnitude-only, parameter-matched real, and FLOP-matched real baselines. The results show that the advantage of CVNNs is conditional rather than universal. On synthetic Radio Frequency (RF) data, Phase Shift Keying (PSK)-only tasks favor phase-aware and complex-valued models, Quadrature Amplitude Modulation (QAM)-only tasks favor magnitude, mixed PSK+QAM gives only a small edge to complex-valued representation, and unseen carrier-phase rotations break coordinate-dependent models unless signal augmentation is introduced. The same pattern appears outside RF: in a quantum-wavefunction pilot study, momentum is invisible to $|\psi|$ but recoverable from phase information, while in an EEG analytic-signal pilot study, phase locking, amplitude bursts, and phase-amplitude coupling each select different coordinate views. This work also identifies a methodological artifact in the benchmarking of CVNNs. 

On the RadioML 2018.01A subset~\cite{8267032}, a CReLU~\cite{shang2016understanding} complex-valued model outperformed the best real baseline by \textbf{22.94 percentage points (PP)} under \emph{matched-shared-trial selection}. However, under \emph{independent per-family tuning}, on the same data and the same 16-trial search space, this performance gap collapses to \textbf{2.46 PP}. A deeper analysis of the activation functions shows that the per-step gradient traces the inflated gap to a high learning-rate first-step instability in the real baselines: AdamW~\cite{loshchilov2017decoupled} pushes the classifier head into a dead-ReLU regime, while the complex-valued model tolerates the same trial because its $(\Re,\Im)$ parameter coupling distributes the loss signal across more degrees of freedom. A learning-rate $\times$ activation factorial confirms that the failure is primarily \textbf{hyperparameter-driven, not activation-driven}. This work frames these findings through \emph{Wirtinger calculus}, a \textbf{Liouville activation trilemma}, and a \emph{structural equivalence} between complex 1D convolutions and the $U(1)$-equivariant subspace of stacked-real two-channel convolutions. The resulting claim is scoped as follows: complex-valued models are useful structured inductive biases for phase- and amplitude-bearing data, but their improvements depend on representation, symmetry, and optimization. Therefore, the large CVNN-vs-real gaps should be reported under both matched-shared-trial and independent per-family selection before being interpreted as an architectural advantage. The code is available on GitHub\footnote{\url{https://github.com/ashu1069/Neural-Networks-in-Complex-Spaces}}.
\end{abstract}

\keywords{Complex Numbers \and Neural Networks \and Representation Learning \and Geometric Deep Learning}

\section{Introduction}
\label{sec:introduction}

Many datasets contain phase information that can be represented or measured in complex number spaces: \textbf{(1)} Radio receivers record in-phase and quadrature samples, \textbf{(2)} Fourier and wavelet transforms expose magnitude and phase, \textbf{(3)} quantum states are complex wavefunctions, \textbf{(4)} analytic-signal representations of neural time series separate amplitude envelopes from instantaneous phase, as shown in Figure~\ref{fig:data_types}. These domains make Complex-valued Neural Networks (CVNNs) an intuitive modeling choice, and prior work has shown that complex layers, complex normalization, and complex activations can be competitive with or superior to real-valued counterparts in settings such as signal classification~\cite{hirose2006complex,trabelsi2017deep}, Magnetic Resonance Imaging (MRI)~\cite{virtue2017better, cole2021analysis}, and unitary recurrent modeling~\cite{arjovsky2016unitary, wisdom2016full}.
\begin{figure}[hbt!]
    \centering
    \includegraphics[width=\linewidth]{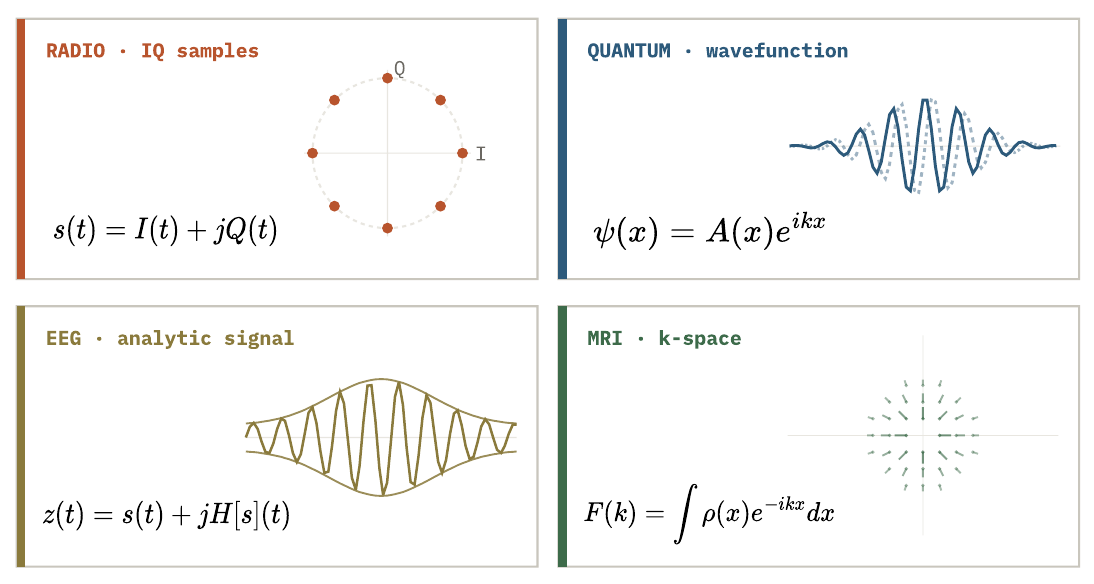}
    \caption{\textbf{Complex-valued data}. Complex-valued signals arise across radio communications, quantum mechanics, neural recordings, and medical imaging. The mathematical object is identical; what differs is which axis of $z = re^{i\theta}$ carries the task-discriminative information.}
    \label{fig:data_types}
\end{figure}
However, the fact that the input data is complex-valued does not by itself explain why a complex-valued network should help. A sample $z = x + iy = re^ {i\theta}$ contains several distinct kinds of information, which we will discuss further in this paper. The magnitude $r = |z|$ may encode energy or amplitude; the phase $\theta = \arg z$ may encode angle, delay, carrier convention, or momentum; and the task may depend on a relation between amplitude and phase. A real-number-valued model given $(x,y)$ sees the full Cartesian representation, and given $(r,\cos\theta,\sin\theta)$, it sees magnitude and phase as two explicit entities. A magnitude-only or phase-only model exposes controlled information bottlenecks with real-valued models. Conclusively, a measured advantage of CVNNs can be attributed to several mechanisms: the model may receive a better coordinate system, impose a useful symmetry, regularize the wrong degrees of freedom, or simply tolerate a hyperparameter regime in which real-number model baselines fail.

This paper explicitly asks a narrower question than "\textit{Are complex-valued networks better than real-valued ones?}'': \textbf{When do complex-valued networks dominate, and what mechanism explains the performance gain?} Additionally, the author posits that CVNNs should be treated as structured inductive biases for phase- and amplitude-bearing data, not as universally stronger function approximators. They are most useful when the geometry of the task aligns with the multiplication operations in the complex space. Their performance can be matched or surpassed when explicit real coordinates expose the relevant variable, and large CVNN-vs-real performance gaps can be artifacts of the evaluation protocol rather than genuine superiority in the architecture.

Structurally speaking, the reason complex layers can help is that complex multiplication is not an arbitrary two-channel real number mapping. Multiplying $(x+iy)$ by $(a+ib)$ gives
\[
    (a+ib)(x+iy) = (ax-by) + i(ay+bx),
\]
which corresponds to the real number matrix
\[
    \begin{pmatrix}
        a & -b \\
        b & a
    \end{pmatrix}
    \begin{pmatrix}
        x \\
        y
    \end{pmatrix}
\]
Now, let's understand that this is a rotation-and-scale coupling between the real and imaginary channels. Whereas real-valued layers can learn arbitrary $2 \times 2$ channel mixing or couplings, while the complex-valued layers impose the \textit{structural constraint} that each coupling must be restricted in the two-parameter subspace, $aI+bJ$. The paper further shows that this subspace is exactly the $U(1)$-equivariant subspace of stacked-real 2-channel convolutions. Complex multiplication, therefore, implements a built-in \textit{symmetry constraint}, restricting real-valued $2\times 2$ channel-mixing to the structured form that is automatically preserved in complex coordinates. This connects CVNNs to the broader group-equivariant learning perspective~\cite{cohen2016group, cohen2016steerable, bronstein2021geometric, weiler2019general, worrall2017harmonic}, while keeping the empirical question concrete: \textbf{the constraint should help only when the task respects the relevant phase geometry}.

Next, concerning the optimization in complex space, CVNN comparisons often use matched hyperparameter trials to assert a similar ground for evaluation of real and complex families. However, a shared trial can answer a different question from independent per-family tuning. A matched-shared-trial rule looks to find out which family is more robust under a common hyperparameter allocation, whereas independent per-family selection asks which family achieves the best-tuned performance. These are both legitimate ways of approaching the evaluation, but they should not be combined with each other. In the experiments with the RadioML 2018.01A dataset~\cite{8267032}, this distinction changes the ultimate conclusion: under CReLU~\cite{shang2016understanding} activation, the complex-valued model surpasses the best real-valued baseline by 22.94 percentage points (PP) under matched-shared-trial selection, but the gap collapses to 2.46 PP under independent per-family selection (same data and the same 16-trial search space). Further investigations lead to the per-step gradient telemetry, which shows that the larger gap is caused by a high-learning-rate first step of the AdamW optimizer, which drives real baselines into a dead-ReLU regime, while the complex model tolerates the same configuration.

This work, therefore, separates three mechanisms that are often entangled in CVNN claims:
\begin{enumerate}
    \item \textbf{Representation:} whether the label signal lives in magnitude, phase, Cartesian coordinates, or a phase-amplitude relation.
    \item \textbf{Symmetry:} whether complex multiplication supplies an appropriate $U(1)$-equivariant constraint for the task.
    \item \textbf{Optimization and selection:} whether an apparent gap reflects peak performance or robustness to a shared hyperparameter regime.
\end{enumerate}

We evaluate these mechanisms through a set of controlled stress tests. On synthetic Radio Frequency (RF) modulation tasks, Phase Shift Keying (\texttt{PSK})-only data favor phase-aware and complex models, Quadrature Amplitude Modulation (\texttt{QAM})-only data favor magnitude, mixed \texttt{PSK+QAM} gives a smaller edge to CVNN, and fixed carrier-phase rotations expose the lack of \emph{automatic invariance} in the tested architectures. We then performed pilot studies from two distinct domains: (1) in a quantum wavefunction, the phase is indispensable for momentum classification - magnitude-only models remain at chance while phase-aware and complex views solve the task; (2) for an EEG analytic-signal, different event types select different coordinate views - phase locking is phase-dominant, amplitude bursts are magnitude-dominant, and phase-amplitude coupling favors a joint polar representation. These experiments support the same conclusion across domains: the winning model is determined less by whether the input is formally complex than by where the task-relevant information lives.

The paper makes four contributions:
\begin{itemize}
    \item \textbf{A representation-first protocol for CVNN evaluation.} We compare native complex networks with Cartesian real, polar, phase-only, magnitude-only, parameter-matched real, and FLOP-matched real baselines. This protocol separates complex arithmetic from coordinate access and information bottlenecks.

    \item \textbf{A structural interpretation of complex convolution.} We show that one-dimensional complex convolution is equivalent to a $U(1)$-equivariant two-channel real convolution whose kernel taps are constrained to the $aI+bJ$ subspace. This identifies the complex layer as a symmetry-constrained real layer rather than a mysterious additional modeling primitive.

    \item \textbf{Cross-domain stress tests of phase and amplitude information.} Synthetic RF, quantum wavefunctions, and EEG analytic signals all show that phase-bearing, amplitude-bearing, and phase-amplitude-coupled tasks select different representations. Complex-valued models help when their inductive bias matches this structure, but they do not dominate all coordinate views.

    \item \textbf{A selection-rule caution for CVNN-vs-real benchmarking.} On a controlled RadioML 2018.01A subset, we show that a large matched-shared-trial CVNN gap can shrink by an order of magnitude under independent per-family tuning. Gradient telemetry and a learning-rate/activation factorial trace the inflated gap to a high-learning-rate dead-seed mechanism in the real baselines.
\end{itemize}

The rest of the paper is organized as follows. Section~\ref{sec:background} develops the first-principles background: complex coordinate views, Wirtinger calculus, the Liouville activation trilemma, and the $U(1)$-equivariant interpretation of complex convolution. Section~\ref{sec:protocol} defines the representation-first evaluation protocol and the two selection rules. Section~\ref{sec:results_representation} presents the RF, quantum, and EEG stress tests. Section~\ref{sec:radioml} analyzes the RadioML result and the selection-rule artifact. Section~\ref{sec:related_work} positions the paper in CVNNs, equivariant learning, signal-processing applications, and evaluation methodology. Section~\ref{sec:limitations} states the limitations, and Section~\ref{sec:conclusion} concludes this experimental study of CVNNs.
\section{Related Work}
\label{sec:related_work}
CVNNs share a fair amount of history in signal processing and deep learning. Hirose's textbook~\cite{hirose2006complex} gives the standard mathematical characterization of complex-valued model learning, while Aizenberg's multi-valued neuron structure~\cite{aizenberg2011complex} represents an earlier segment of complex-domain neural models. The modern deep-learning formulation is anchored primarily by Trabelsi et al.~\cite{trabelsi2017deep}, who introduced widely used key components such as complex convolutions, complex batch normalization, complex initialization schemes, and the \texttt{CReLU/ZReLU/ModReLU} activation family. Related work explored complex-valued CNNs for classification tasks~\cite{guberman2016complex} and complex/unitary recurrent networks for stable sequence modeling~\cite{arjovsky2016unitary,wisdom2016full}. This paper builds on similar lines but changes the empirical question. Instead of asking whether a complex-valued model outperforms a real-valued model under a single comparison protocol, the author asks which mechanism explains a measured advantage. The resulting protocol compares native complex networks with Cartesian, polar, phase-only, magnitude-only, parameter-matched, and FLOP-matched real baselines. This separates complex arithmetic from coordinate system access, symmetry constraints, capacity, and optimization robustness.

\paragraph{Wirtinger calculus, complex backpropagation, and activation design.}
The major revelation about CVNNs is that backpropagation (training) does not require holomorphic activations. The standard mechanism uses Wirtinger or CR-calculus, where complex-valued functions are differentiated as real-differentiable maps over real and imaginary coordinates. Brandwood~\cite{brandwood1983complex} introduced the complex gradient operator in adaptive array signal processing, and Kreutz-Delgado~\cite{kreutz2009complex} later gave a tutorial treatment widely used by machine-learning practitioners. Modern autodiff frameworks follow the conjugate-Wirtinger convention for real-valued losses with complex parameters. This calculus makes practical activations such as \texttt{CReLU, ZReLU, ModReLU, Siglog, ComplexCardioid}, and \texttt{ComplexTanh} valid even when they are not holomorphic. Prior CVNN work introduced or used several of these activations~\cite{trabelsi2017deep,arjovsky2016unitary,wisdom2016full}. This paper's contribution is to make the activation trade-off explicit through a \textit{Liouville trilemma} (Proposition~\ref{prop:activation_trilemma}) and empirical Cauchy-Riemann residual diagnostics. This connects the mathematical impossibility of a bounded, entire, non-constant activation to measurable activation behavior at initialization.

\paragraph{Group-equivariant and symmetry-constrained learning.}
A central point of this paper's analysis is that complex multiplication is a symmetry-constrained real map. The broader idea that neural-network layers should respect task symmetries is well established in group-equivariant deep learning. Cohen and Welling introduced group-equivariant convolutional networks for discrete transformations~\cite{cohen2016group}; steerable and general $E(2)$-equivariant CNNs extend this idea to richer transformation groups~\cite{cohen2016steerable,weiler2019general}; and Bronstein et al.~\cite{bronstein2021geometric} survey the geometric deep-learning framework. Harmonic networks use complex circular harmonics to parameterize rotation-equivariant two-dimensional filters~\cite{worrall2017harmonic}. The result is the one-dimensional $U(1)$ specialization of this principle. A complex \texttt{Conv1d} layer is equivalent to a stacked-real two-channel convolution whose kernel taps are constrained to the $aI+bJ$ subspace. Thus, the complex layer is not more expressive than a generic real two-channel layer; it is less expressive in a symmetry-aligned way. This interpretation helps explain why complex layers can help on phase-bearing data while remaining unnecessary or suboptimal when an explicit real coordinate view exposes the relevant variable.

\paragraph{Complex-valued networks for RF modulation classification.}
RF modulation classification is a natural testbed for CVNNs because IQ samples are intrinsically complex-valued. O'Shea et al.~\cite{o2016convolutional} introduced the RadioML 2018.01A benchmark and reported strong real-valued ResNet-style baselines on the full 24-class task. Subsequent work has studied complex-valued architectures for automatic modulation classification, typically reporting modest gains over comparable real-valued models~\cite{krzyston2021modulation,tu2020complex}. The RadioML result (Section~\ref{sec:radioml}) is consistent with this literature once activation and selection effects are separated. Under stable activations and independent per-family tuning, the complex advantage is only a few percentage points. The much larger \texttt{CReLU} matched-shared-trial gap in the experiments is not a contradiction of prior work; it is a selection-rule artifact produced by high-learning-rate instability in the real baselines. The contribution is therefore methodological: \textit{large matched-trial CVNN gaps should be interpreted only after checking independent per-family selection, dead-seed counts, and early-step gradient telemetry}.

\paragraph{Complex-valued models in physics and biomedical signals.}
Complex-valued representations also arise in scientific and biomedical domains. In MRI, CVNNs have been used for fingerprinting and reconstruction, with prior work reporting benefits over real-valued alternatives~\cite{virtue2017better,cole2021analysis}. In physics-informed learning, real-valued neural networks are commonly used to approximate solutions of differential equations~\cite{raissi2019physics}, but many physical states, including quantum wavefunctions, are naturally complex-valued. Our quantum pilot is a small representation test in this setting: momentum is unrecoverable from magnitude alone but recoverable from phase. Biomedical time-series analysis gives another example. EEG deep learning has largely used real-valued networks over raw signals or spectral features~\cite{schirrmeister2017deep,lawhern2018eegnet}. At the same time, analytic-signal methods based on Hilbert or wavelet transforms are standard tools for studying phase locking, amplitude envelopes, and phase-amplitude coupling~\cite{tort2010measuring}. The EEG pilot (Section~\ref{sec:eeg_representation}) uses this signal-processing structure to test the representation-first protocol: phase locking favors phase features, amplitude events favor magnitude, and phase-amplitude coupling favors joint polar information. The pilot is not intended as a clinical benchmark; it shows that the same representation logic transfers beyond RF.

\paragraph{Benchmarking methodology and evaluation variance.}
The selection-rule artifact identified by this work belongs to a broader literature showing that apparent model advantages can shrink or disappear under tighter evaluation protocols. Henderson et al.~\cite{henderson2018deep} documented instability and reporting issues in deep reinforcement learning; Lucic et al.~\cite{lucic2018gans} showed that GAN comparisons are highly sensitive to tuning and evaluation choices; and Melis et al.~\cite{melis2017state} showed that language-model comparisons can be misleading without careful hyperparameter control. Bouthillier et al.~\cite{bouthillier2021accounting} formalized the distinction between seed variance and hyperparameter variance and recommended reporting both.

The contribution to this methodology literature is specific to CVNN benchmarking. Matched-shared-trial selection is often used as a fairness device because it holds the trial index fixed across model families. This paper shows that this rule can instead become a robustness stress test: it may select a trial that the complex model survives, but real baselines do not. Reporting both matched-shared-trial and independent per-family selection, therefore, changes the scientific interpretation of a CVNN-vs-real gap.
\section{Background and First Principles}
\label{sec:background}

CVNNs are introduced as neural networks whose parameters and activations live in $\mathbb{C}$. For this paper, the more important point is not the data type itself, but the structure it imposes on mathematical operations. A \textit{complex representation} bundles amplitude and phase into a single algebraic entity. A \textit{complex layer} constrains how the real and imaginary channels may mix. A \textit{complex activation} must navigate a different set of analytic trade-offs from ordinary real nonlinearities. This section makes these three facts explicit.

\subsection{Complex inputs contain multiple possible prediction variables}
\label{sec:complex_inputs}

A complex number can be written in Cartesian or polar form:
\[
    z = x + iy = r e^{i\theta},
    \qquad
    r = |z| = \sqrt{x^2+y^2},
    \qquad
    \theta = \arg z .
\]
These two descriptions contain the same information, but they expose different variables to a learning algorithm. The Cartesian view $(x,y)$ presents the real and imaginary coordinates directly. The polar view $(r,\cos\theta,\sin\theta)$ separates magnitude from phase while avoiding the discontinuity of $\theta$ at $\pm \pi$. A magnitude-only view $r$ removes phase entirely; a phase-only view $(\cos\theta,\sin\theta)$ removes amplitude entirely, as shown in Figure~\ref{fig:complex_sample}.
\begin{figure}[hbt!]
    \centering
    \includegraphics[width=\linewidth]{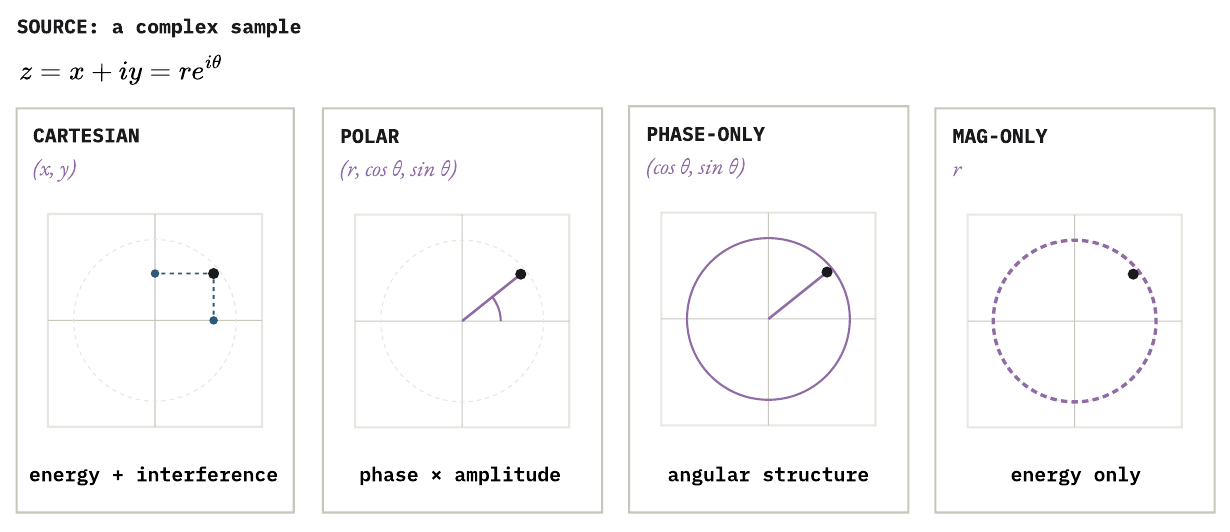}
    \caption{\textbf{Coordinate views of a complex signal.} A complex sample $z = x + iy = re^{i \theta}$ can be presented to a real-valued model through four distinct coordinate maps. Each encodes a different hypothesis about where the task-discriminative information resides.}
    \label{fig:complex_sample}
\end{figure}

This distinction matters because different complex-valued domains place label information in different parts of the data representation. In RF modulation, PSK symbols are primarily phase-structured, while QAM symbols also carry amplitude structure. In quantum wavefunctions, the magnitude $|\psi|$ can describe probability density, while phase gradients can encode momentum. In analytic-signal representations of EEG or MEG, the magnitude is an amplitude envelope and the angle is instantaneous phase (Figure~\ref{fig:data_types}). Thus, a complex-valued input does not imply that a native complex network is the right model. The relevant question is where the predictive information lives: magnitude, phase, Cartesian coordinates, or a relation between amplitude and phase.

This observation motivates the representation-first approach used throughout this paper. A CVNN is not simply compared with real-valued neural networks on stacked real and imaginary channels, but also with \textit{polar}, \textit{phase-only}, and \textit{magnitude-only} real-number baselines. These baselines are not just ablations; they test different hypotheses about the data generation and structure. If phase-only features win, the task is phase-dominant; if magnitude-only features win, the task is amplitude-dominant; if polar features win, the task likely depends on a phase-amplitude relation. After these coordinate views are controlled, if the CVNNs dominate, the remaining explanation must involve the \textit{inductive bias} or the \textit{optimization properties of complex arithmetic}.

\subsection{Complex multiplication: A constrained real linear mapping}
\label{sec:complex_layer_constraint}

The central fact about the architecture is that complex number multiplication is not an arbitrary two-channel real transformation. Multiplying an input $x+iy$ by a complex weight $a+ib$ gives
\[
    (a+ib)(x+iy)
    =
    (ax-by) + i(ay+bx).
\]
In real coordinates, this is
\[
    \begin{pmatrix}
        \Re((a+ib)(x+iy)) \\
        \Im((a+ib)(x+iy))
    \end{pmatrix}
    =
    \begin{pmatrix}
        a & -b \\
        b & a
    \end{pmatrix}
    \begin{pmatrix}
        x \\
        y
    \end{pmatrix}.
\]
A generic real-valued two-channel layer can use any matrix in $\mathbb{R}^{2\times 2}$. A complex layer is restricted to matrices of the form
\[
    aI + bJ,
    \qquad
    J =
    \begin{pmatrix}
        0 & -1 \\
        1 & 0
    \end{pmatrix}
\]
This is a two-parameter subspace of the four-parameter space of arbitrary real channel mixing matrices. The restriction is precisely what gives the complex layer its inductive bias: \textbf{it couples the real and imaginary channels as a rotation and scaling rather than allowing unconstrained channel mixing}.

This constraint has a symmetry interpretation. A global phase shift $z \mapsto e^{i\phi}z$ acts on the stacked real coordinates by the rotation matrix
\[
    R_\phi =
    \begin{pmatrix}
        \cos\phi & -\sin\phi \\
        \sin\phi & \cos\phi
    \end{pmatrix}.
\]
A two-channel real linear map $W$ is equivariant to this global phase action if it commutes with every $R_\phi$:
\[
    WR_\phi = R_\phi W
    \qquad
    \text{for all } \phi \in \mathbb{R}.
\]
The maps that satisfy this condition are exactly the complex-scalar maps.

\begin{prop}[Equivariant two-channel kernels are complex scalars]
\label{prop:complex_equivariance}
Let $W \in \mathbb{R}^{2\times 2}$. The following are equivalent:
\begin{enumerate}
    \item $WR_\phi = R_\phi W$ for all $\phi \in \mathbb{R}$;
    \item $W = aI + bJ$ for some $a,b \in \mathbb{R}$;
    \item $W$ is multiplication by the complex scalar $a+ib$ under the standard embedding of $\mathbb{C}$ into $\mathbb{R}^{2\times 2}$.
\end{enumerate}
\end{prop}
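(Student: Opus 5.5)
I will prove the cycle of implications (1)$\Rightarrow$(2)$\Rightarrow$(3)$\Rightarrow$(1), since each step is a short linear-algebra computation.

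\textbf{From (1) to (2).} The key observation is that a single well-chosen angle already forces the structure. I will take $\phi=\pi/2$. Then $R_{\pi/2}=J$, so condition (1) implies $WJ=JW$. I will write $W=\begin{pmatrix} p & q\\ r & s\end{pmatrix}$ and compute both products:
\[
WJ=\begin{pmatrix} q & -p\\ s & -r\end{pmatrix},
\qquad
JW=\begin{pmatrix} -r & -s\\ p & q\end{pmatrix}.
\]
Equating entries gives $q=-r$ and $s=p$. Setting $a=p$ and $b=r$, this yields $W=aI+bJ$.

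\textbf{From (2) to (3).} This is the computation already displayed in Section~\ref{sec:complex_layer_constraint}. Under the embedding $x+iy\mapsto(x,y)^\top$, the matrix $aI+bJ$ sends $(x,y)^\top$ to $(ax-by,\,bx+ay)^\top$. That vector is exactly the real-coordinate form of $(a+ib)(x+iy)$. Equivalently, the map $a+ib\mapsto aI+bJ$ is the standard embedding of $\mathbb{C}$ into $\mathbb{R}^{2\times2}$, and it respects products because $J^2=-I$.

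\textbf{From (3) to (1).} I will note that $R_\phi=\cos\phi\, I+\sin\phi\, J$. So $R_\phi$ is itself the embedded complex scalar $e^{i\phi}$. Since multiplication in $\mathbb{C}$ is commutative and the embedding preserves products, $W$ and $R_\phi$ commute for every $\phi$. Directly, $(aI+bJ)$ and $(\cos\phi\, I+\sin\phi\, J)$ are both polynomials in $J$, so they commute.

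\textbf{Main obstacle.} No step is genuinely hard. The only point needing care is the direction (1)$\Rightarrow$(2). There one must notice that commuting with all rotations is equivalent to commuting with the single generator $J$. This holds because every $R_\phi$ lies in $\operatorname{span}\{I,J\}$, and $\{R_\phi\}$ spans that space. Choosing $\phi=\pi/2$ makes this explicit and avoids any argument by differentiation in $\phi$ or by Schur's lemma.
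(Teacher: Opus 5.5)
Your proof is correct and follows essentially the same route as the paper: both reduce (1) to $WJ=JW$, expand it entrywise, and prove the converse by noting that $aI+bJ$ and $R_\phi=\cos\phi\,I+\sin\phi\,J$ are both polynomials in $J$. The only difference is that you get $WJ=JW$ by evaluating at $\phi=\pi/2$, where $R_{\pi/2}=J$, rather than by differentiating at $\phi=0$; this is slightly more elementary, and your explicit choice $a=p$, $b=r$ shows $W=aI+bJ$ exactly for the stated $J$, so the paper's ``up to sign convention'' caveat is unnecessary.
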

\begin{proof}
The equivalence between (2) and (3) follows from the real matrix representation of complex multiplication. If $W=aI+bJ$, then $W$ commutes with every polynomial in $J$, and therefore with $R_\phi=\exp(\phi J)$, giving (1). Conversely, if $WR_\phi=R_\phi W$ for every $\phi$, differentiating at $\phi=0$ gives $WJ=JW$. Writing
\[
    W =
    \begin{pmatrix}
        p & q \\
        r & s
    \end{pmatrix}
\]
and expanding $WJ=JW$ forces $s=p$ and $r=-q$, hence
\[
    W =
    \begin{pmatrix}
        p & q \\
        -q & p
    \end{pmatrix}
    =
    aI+bJ
\]
up to the sign convention used for $J$ and the ordering of real and imaginary channels.
\end{proof}

Proposition~\ref{prop:complex_equivariance} clarifies what a complex convolution contributes. It is not more expressive than an unconstrained stacked-real convolution. Rather, it is less expressive in a structured way: \textbf{it removes the degrees of freedom that violate global phase equivariance}. This can improve generalization when the task respects phase geometry, and it can be irrelevant or harmful when the label depends on a coordinate system that does not respect that symmetry. This places one-dimensional complex convolution inside the broader framework of group-equivariant neural networks~\cite{cohen2016group,cohen2016steerable,bronstein2021geometric,weiler2019general,worrall2017harmonic}.

The same observation also explains why a rotation-equivariant real-number baseline is not a separate substantive model in our experiments. A stacked-real convolution whose kernel taps are constrained to the $aI+bJ$ subspace follows the same forward computation and the same gradient trajectory as the corresponding complex convolution, up to channel-layout conventions. We therefore use this equivalence as a correctness witness, and reserve the empirical comparisons for native complex models versus unconstrained Cartesian, parameter-matched, and FLOP-matched real baselines.

\subsection{Backpropagation does not require holomorphic activations}
\label{sec:wirtinger}

A common misconception is that complex-valued neural networks require holomorphic nonlinearities. This would be a severe restriction: many useful nonlinearities in deep learning are bounded, piecewise-defined, or magnitude-dependent, none of which is compatible with global holomorphy. In practice, CVNNs are trained using Wirtinger calculus, which treats a complex function as a real-differentiable function of two real variables~\cite{brandwood1983complex,kreutz2009complex,hirose2006complex,trabelsi2017deep}.

Let $f:\mathbb{C}\to\mathbb{C}$ be written as
\[
    f(z) = u(x,y) + i v(x,y),
    \qquad z=x+iy.
\]
The Wirtinger derivatives are
\[
    \frac{\partial f}{\partial z}
    =
    \frac{1}{2}
    \left(
        \frac{\partial f}{\partial x}
        -
        i\frac{\partial f}{\partial y}
    \right),
    \qquad
    \frac{\partial f}{\partial \bar z}
    =
    \frac{1}{2}
    \left(
        \frac{\partial f}{\partial x}
        +
        i\frac{\partial f}{\partial y}
    \right).
\]
The Cauchy-Riemann equations are equivalent to $\partial f/\partial \bar z = 0$, so holomorphy is the special case in which the conjugate Wirtinger derivative vanishes. But ordinary backpropagation through a real-valued loss does not require this condition. For a real-valued loss $L:\mathbb{C}^n\to\mathbb{R}$ and a complex parameter $w$, gradient descent follows the steepest descent direction in the underlying real plane $(\Re w,\Im w)$, which is represented by the conjugate Wirtinger derivative. This is the convention used by modern autodiff systems for complex-valued parameters.

Consequently, activation functions such as \texttt{CReLU, ZReLU, ModReLU, Siglog}, and \texttt{ComplexCardioid} (as characterized in Figure~\ref{fig:activation-functions-graphs}) need not be holomorphic to be valid components of a trainable CVNN. They only need to be differentiable almost everywhere as real maps from $\mathbb{R}^2$ to $\mathbb{R}^2$, in the same practical sense that ReLU is differentiable almost everywhere on $\mathbb{R}$. As shown in Figure~\ref{fig:activation-characterization}, this diagnostic separates locally holomorphic but unstable activations from bounded non-holomorphic alternatives.
\begin{figure}[hbt!]
    \centering

    \begin{subfigure}[t]{0.48\textwidth}
        \centering
        \includegraphics[width=\linewidth]{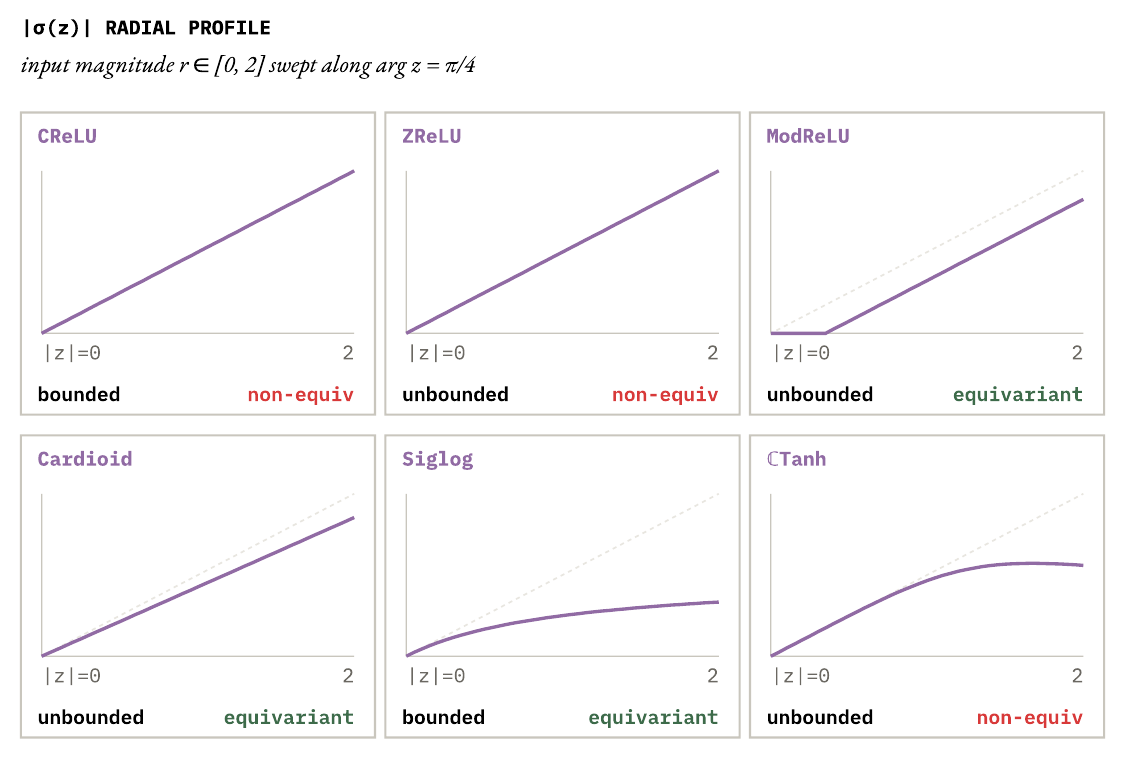}
        \caption{Activation functions used in the model.}
        \label{fig:activation-functions-plots}
    \end{subfigure}
    \hfill
    \begin{subfigure}[t]{0.48\textwidth}
        \centering
        \includegraphics[width=\linewidth]{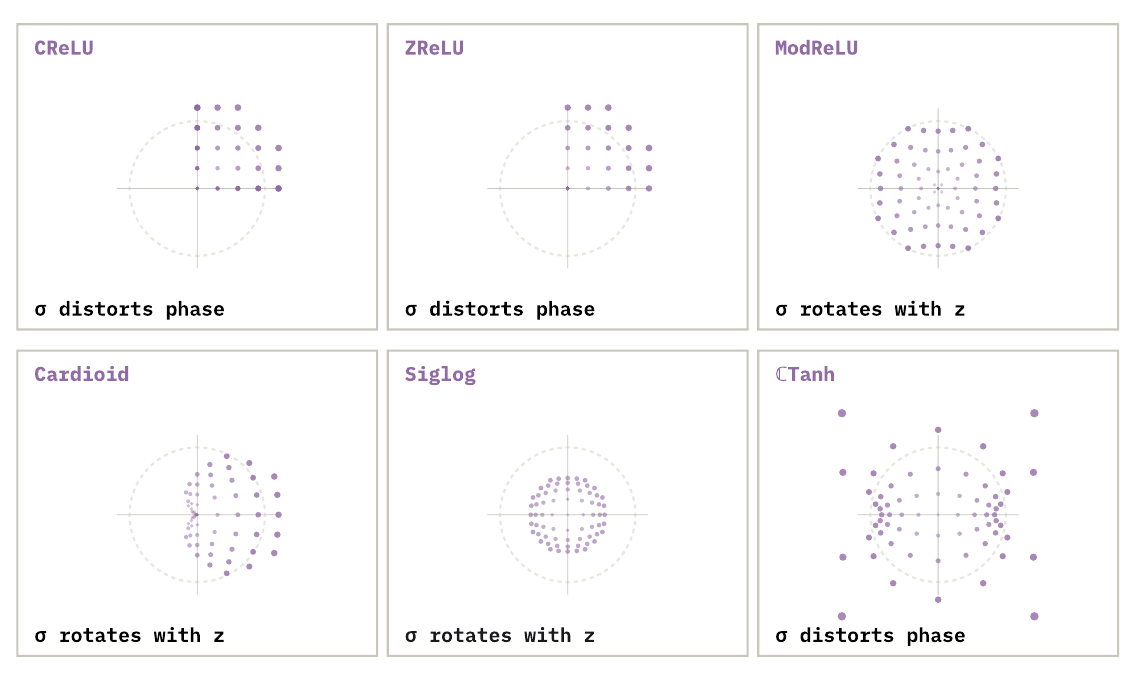}
        \caption{Functional behavior of the activation variants across input ranges.}
        \label{fig:activation-functions-graphs}
    \end{subfigure}

    \caption{Comparison of activation-function definitions and their corresponding response curves.}
    \label{fig:activation-functions}
\end{figure}
\begin{figure}
    \centering
    \includegraphics[width=0.75\linewidth]{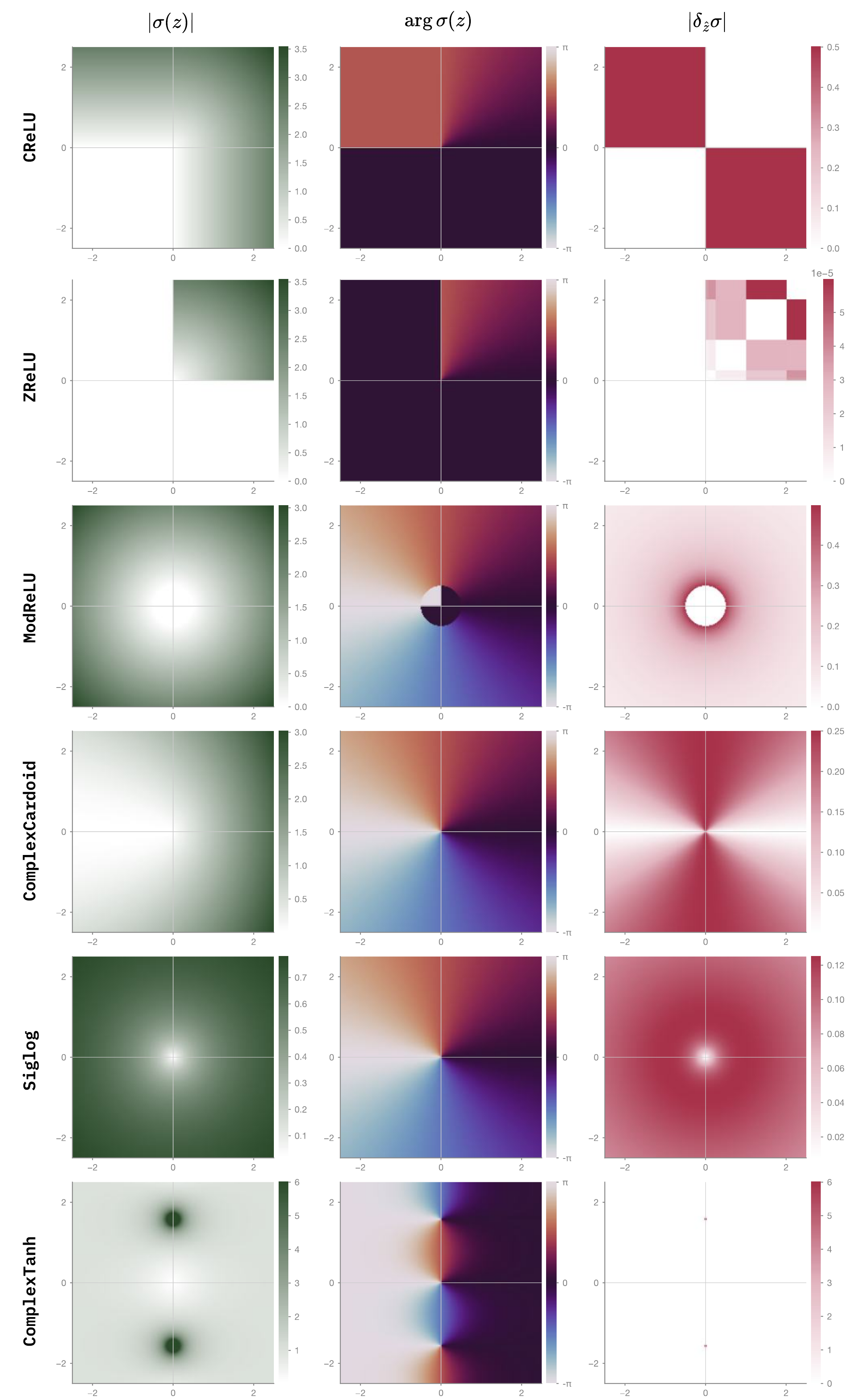}
    \caption{Each row visualizes one activation over $z \in [-3,3]^2$: magnitude $|\sigma(z)|$, phase $\arg \sigma(z)$, and the Cauchy-Riemann residual $|\delta_{\hat{z}}\sigma|$ (zero $\Leftrightarrow$ locally holomorphic). The map exposes where each activation breaks: \texttt{CReLU/ZReLU} break phase equivariance by quadrant; \texttt{ModReLU/Cardioid} concentrate their defect at the origin; \texttt{ComplexTanh} stays holomorphic but blows up at its poles.}
    \label{fig:activation-characterization}
\end{figure}
\subsection{The Liouville Activation Trilemma}
\label{sec:liouville_trilemma}

Although holomorphy is not required for backpropagation, complex nonlinearities still face a genuine design constraint. A useful activation should be nonlinear; a stable activation is often bounded or at least controlled; and a mathematically convenient complex activation would be holomorphic. Liouville's theorem shows that these desiderata cannot all hold globally.

\begin{prop}[Activation trilemma]
\label{prop:activation_trilemma}
No function $\sigma:\mathbb{C}\to\mathbb{C}$ can simultaneously be bounded on all of $\mathbb{C}$, entire, and nonconstant.
\end{prop}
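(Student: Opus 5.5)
This is a direct restatement of Liouville's theorem. I would either cite it or, to keep the paper self-contained, reprove it from the Cauchy integral formula. I argue by contradiction. Suppose $\sigma:\mathbb{C}\to\mathbb{C}$ is entire and bounded, say $|\sigma(z)|\le M$ for all $z\in\mathbb{C}$. The goal is to show that $\sigma'\equiv 0$, so that $\sigma$ is constant, which contradicts the third requirement.

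The key step is the Cauchy estimate for the derivative. Fix $z_0\in\mathbb{C}$ and any radius $R>0$. Because $\sigma$ is holomorphic on a neighborhood of the closed disk $\{|z-z_0|\le R\}$, the Cauchy integral formula gives
\[
    \sigma'(z_0)=\frac{1}{2\pi i}\oint_{|z-z_0|=R}\frac{\sigma(z)}{(z-z_0)^2}\,dz .
\]
Bounding the integrand by $M/R^2$ and the contour length by $2\pi R$ gives $|\sigma'(z_0)|\le M/R$. Since $\sigma$ is entire, $R$ may be taken arbitrarily large, so $\sigma'(z_0)=0$. As $z_0$ was arbitrary, $\sigma'\equiv 0$ on $\mathbb{C}$.

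Finally, a holomorphic function with identically zero derivative on a connected domain is constant. Concretely, $\partial\sigma/\partial z=0$ together with the Cauchy--Riemann condition $\partial\sigma/\partial\bar z=0$ from Section~\ref{sec:wirtinger} gives $\partial\sigma/\partial x=\partial\sigma/\partial y=0$. Hence $\sigma$ is constant along any path in the connected plane $\mathbb{C}$. This contradicts nonconstancy and proves the proposition.

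The only step with real content is the Cauchy estimate. It requires the domain of holomorphy to be all of $\mathbb{C}$, so that $R\to\infty$ is allowed, and the bound to be global. I will add a remark that this is exactly where practical activations escape the trilemma. \texttt{ComplexTanh} is holomorphic away from its poles but is not entire, so the disk eventually contains a pole. Bounded activations such as \texttt{ModReLU} or \texttt{Cardioid} give up holomorphy, so the integral formula does not apply.
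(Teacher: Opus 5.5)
Your proof is correct and is essentially the paper's argument: the paper simply cites Liouville's theorem, and your Cauchy-estimate bound $|\sigma'(z_0)|\le M/R$ followed by the Wirtinger step to constancy is just the standard proof of that theorem written out. One slip in your closing remark: \texttt{ModReLU} and \texttt{Cardioid} are not bounded (both grow linearly in $|z|$, so they fail boundedness as well as holomorphy), and the bounded, non-holomorphic example the paper actually names is \texttt{Siglog}.
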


\begin{proof}
By Liouville's theorem, every bounded entire function on $\mathbb{C}$ is constant.
\end{proof}

Thus, every nonconstant complex activation must give up at least one of boundedness or holomorphy. This gives a useful taxonomy of practical CVNN activations. \texttt{ComplexTanh} preserves holomorphy away from its poles but is unbounded as a meromorphic function. \texttt{Siglog} is bounded and nonconstant but explicitly non-holomorphic because it depends on $|z|$. \texttt{ModReLU, CReLU, ZReLU}, and \texttt{ComplexCardioid} also give up holomorphy, but in different ways: \texttt{ModReLU} gates magnitude, \texttt{CReLU} applies real \texttt{ReLU} separately to real and imaginary parts, \texttt{ZReLU} selects quadrants, and \texttt{ComplexCardioid} applies a phase-dependent magnitude gate~\cite{trabelsi2017deep,arjovsky2016unitary,wisdom2016full}.

For phase-bearing data, there is an additional desideratum: phase equivariance. An activation is phase-equivariant if
\[
    \sigma(e^{i\phi}z) = e^{i\phi}\sigma(z)
    \qquad
    \text{for all } z\in\mathbb{C},\ \phi\in\mathbb{R}.
\]
This property is independent of holomorphy. Some activations preserve the global phase action; others deliberately break it in exchange for sparsity or coordinate-wise gating. The key point of this paper is that the choice of activation is not a software detail. It changes both the symmetry properties and the optimization behavior of the network, and therefore must be treated as part of the experimental design.

The paper uses the Cauchy-Riemann residual as an empirical diagnostic of these activation function choices. The \textit{residual} measures the magnitude of the conjugate Wirtinger derivative, and is zero on open sets where the activation function is holomorphic. When we pair this residual with gradient norms at the network initialization, it makes the activation trade-off visible as shown in Figure~\ref{fig:activation-tradeoff}. \textit{A candidate either pays a holomorphy cost, a boundedness/stability cost, or a phase-equivariance cost}. The downstream experiments further test how these activation-level choices interact with the representation, symmetry, and hyperparameter search space.
\begin{figure}[hbt!]
    \centering
    \includegraphics[width=\linewidth]{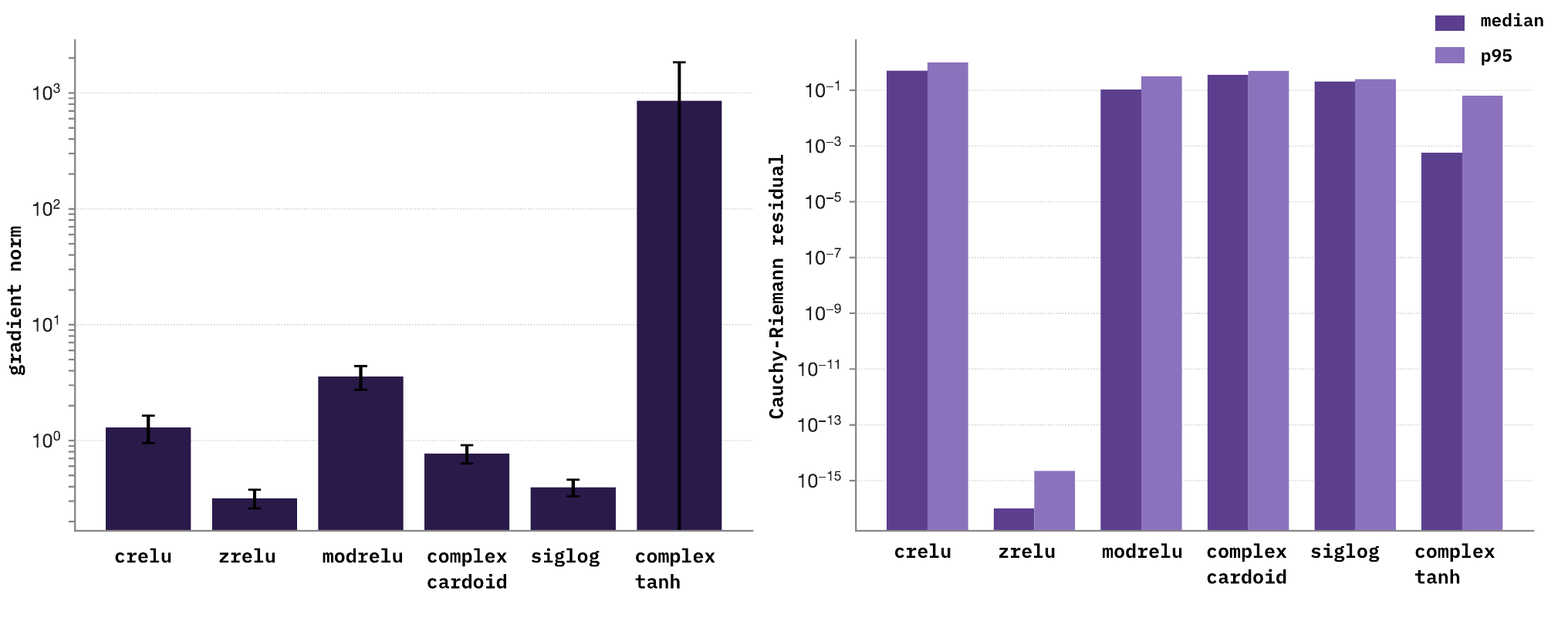}
    \caption{\textbf{Activation trade-off at initialization}. \textit{Left}: mean gradient norm at initialization for each candidate complex activation (log scale; error bars show $\pm1$ std). \textit{Right}: median and 95th-percentile Cauchy–Riemann residual evaluated on a $[-3,3]^2$ grid in $\mathbb{C}$ (log scale). No activation wins on both axes: \texttt{complex tanh} is locally holomorphic (CR residual $\approx 0$) but unbounded, its poles at $\pm i\pi/2$ lie inside the grid and inflate the gradient, while \texttt{siglog} is bounded but never holomorphic. \texttt{crelu, zrelu, modrelu}, and \texttt{complex cardioid} populate the interior of the trade-off, each sacrificing holomorphy in a different way to gain bounded, well-conditioned gradients. This is the Liouville constraint made empirical: on $\mathbb{C}$ you cannot simultaneously be bounded, non-constant, and holomorphic.}
    \label{fig:activation-tradeoff}
\end{figure}
\section{Evaluation Protocol}
\label{sec:protocol}

The goal of the evaluation is not to show that CVNNs always outperform real-valued networks. Instead, the author asks which mechanism explains a measured advantage. A complex model may win because the task is phase-bearing, because complex multiplication imposes an appropriate symmetry constraint, because a real baseline was given a poor coordinate system, or because the complex family tolerates a hyperparameter regime in which real baselines fail. We therefore separate five axes: \textbf{representation, symmetry, capacity, selection, and invariance}. Table~\ref{tab:protocol-map} summarizes the protocol. Each row is designed to remove one possible ambiguity in a CVNN-vs-real comparison.

\begin{table}[hbt!]
\centering
\small
\setlength{\tabcolsep}{5pt}
\renewcommand{\arraystretch}{1.25}
\begin{tabularx}{\linewidth}{p{0.22\linewidth} X X}
\toprule
\textbf{Diagnostic axis} & \textbf{Protocol question} & \textbf{What the comparison rules out} \\
\midrule
\textbf{Representation} &
Does the label live in magnitude, phase, Cartesian coordinates, or a phase--amplitude relation? &
A CVNN win is caused merely by giving the model a better coordinate view. \\
\midrule
\textbf{Symmetry} &
Does the $U(1)$-equivariant $aI+bJ$ constraint match the task geometry? &
Overclaiming complex arithmetic when an explicit real representation already captures the signal. \\
\midrule
\textbf{Capacity / compute} &
Does the gain survive parameter-matched and FLOP-matched real baselines? &
Attributing width, parameter count, or arithmetic budget effects to complex structure. \\
\midrule
\textbf{Selection rule} &
Does the gap survive both matched-shared-trial and independent per-family tuning? &
Reporting robustness to one shared hyperparameter trial as peak architectural superiority. \\
\midrule
\textbf{Invariance} &
Does performance survive fixed carrier-phase rotation, or only after augmentation? &
Assuming layer equivariance automatically produces end-to-end invariance. \\
\bottomrule
\\
\end{tabularx}
\caption{Diagnostic structure of the evaluation protocol. Each axis isolates a different mechanism that can explain a CVNN-vs-real gap.}
\label{tab:protocol-map}
\end{table}

\subsection{Coordinate-view baselines}
\label{sec:coordinate_views}

For every complex-valued sequence $z_t = x_t + i y_t$, we evaluate multiple real-valued views of the same underlying sample. The Cartesian view gives the model the stacked real and imaginary channels,$(x_t, y_t)$. The polar view gives explicit access to magnitude and phase,
\[
    (|z_t|, \cos\theta_t, \sin\theta_t),
    \qquad
    \theta_t = \arg z_t.
\]
We use $(\cos\theta_t,\sin\theta_t)$ rather than $\theta_t$ itself to avoid
the artificial discontinuity at the branch cut. We also include two
information-bottleneck views: a magnitude-only view $|z_t|$ and a
phase-only view $(\cos\theta_t,\sin\theta_t)$ (shown in Figure~\ref{fig:data_types}). These views encode different hypotheses about the data-generating mechanism. A phase-only model should perform well when labels depend on angle, phase lag, or phase gradient. A magnitude-only model should perform well when labels depend on energy or amplitude envelopes. A polar model should perform well when amplitude and phase interact. A native complex model should perform well when the rotation-and-scale coupling of complex multiplication matches the task geometry. Thus, a complex win is only interpretable after checking whether an explicit real coordinate view already exposes the relevant signal.

\begin{figure}[t]
  \centering
  \includegraphics[width=\linewidth]{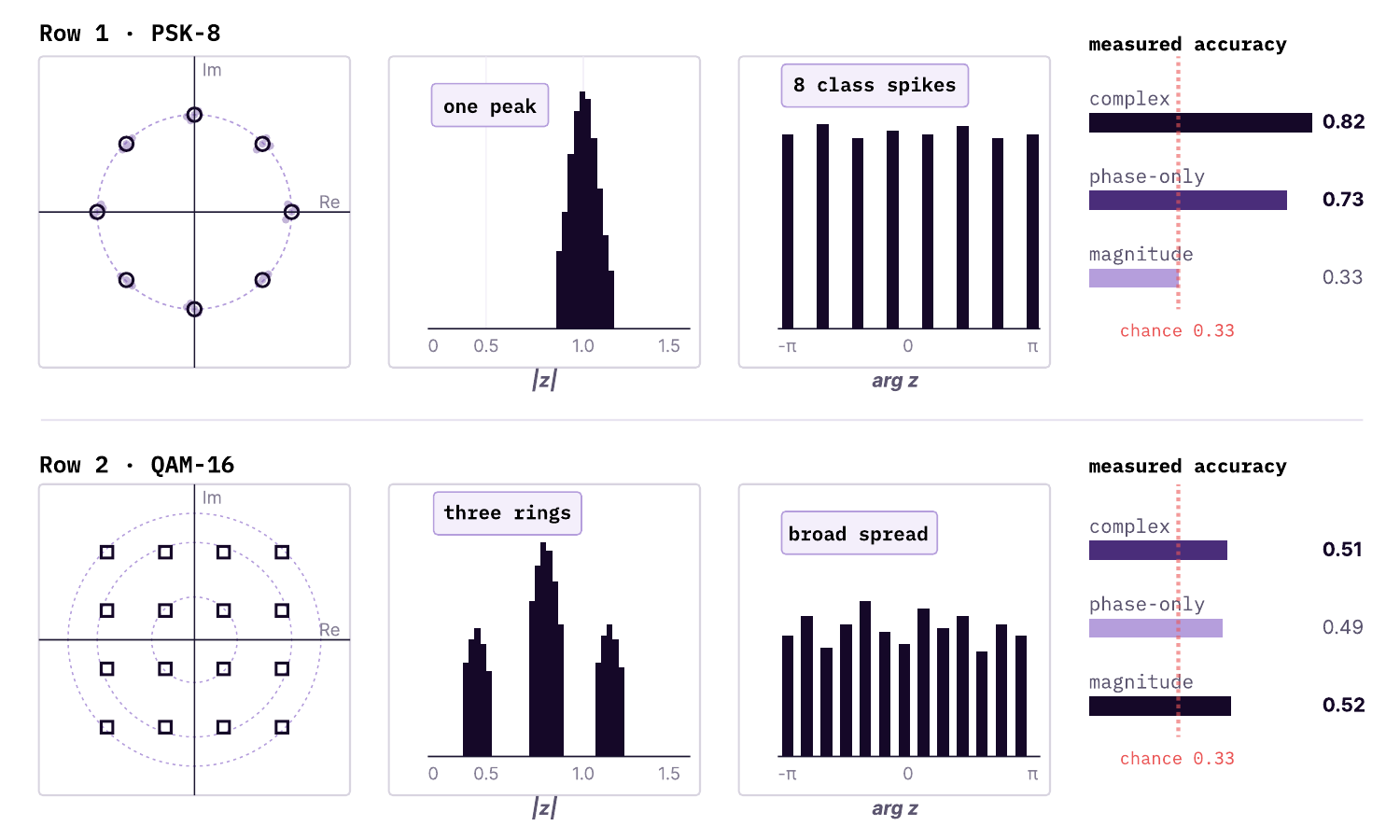}
  \caption{Two RF constellations decide which coordinate view a model
    needs. PSK-8 lives on the unit circle, so $|z|$ collapses every
    class to a single peak while $\arg z$ resolves class-specific spikes.
    QAM-16 lives on a 4$\times$4 grid, so $|z|$ resolves ring structure
    while $\arg z$ is less class-separating. The measured stress test
    accuracies in Table~\ref{tab:rf-stress} confirm the visual prediction:
    phase-only features are useful for PSK, and magnitude features are useful
    for QAM, and the complex model tracks the view that carries the class signal.}
  \label{fig:iq-anatomy}
\end{figure}
\subsection{Model-family controls}
\label{sec:model_family_controls}

Each benchmark compares the native complex model with real-valued baselines that control for different confounds, as shown in Figure~\ref{fig:model_family}.
\begin{figure}[hbt!]
    \centering
    \includegraphics[width=\linewidth]{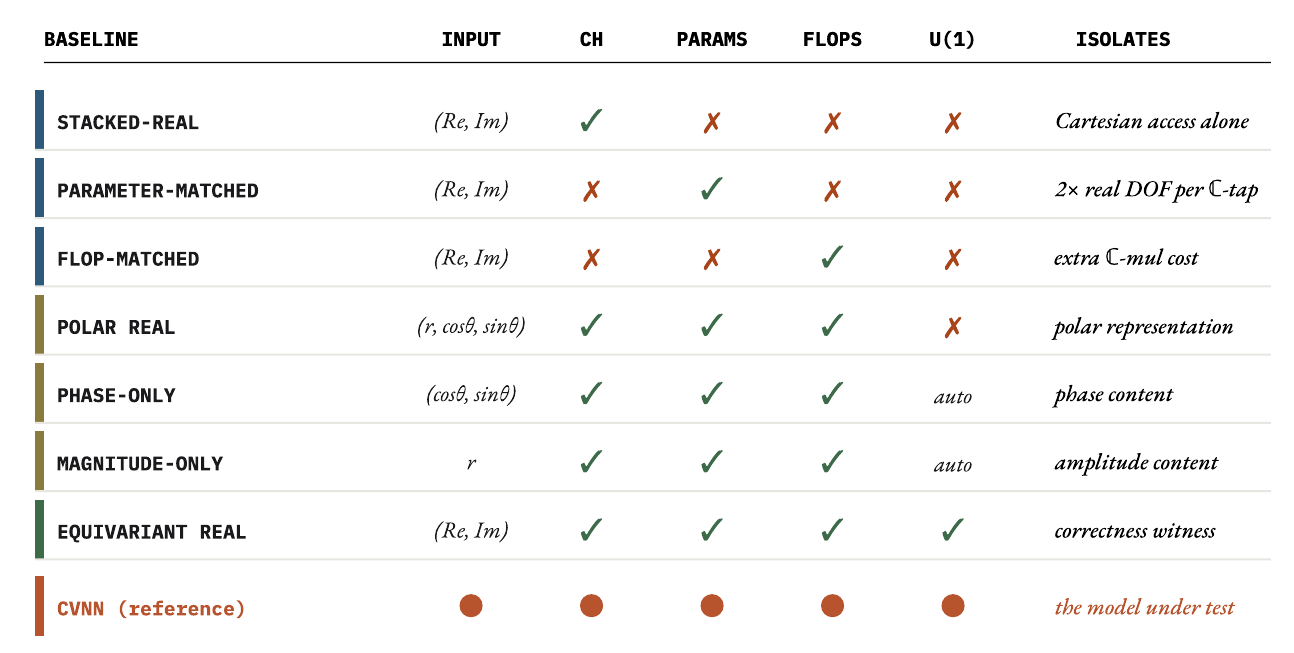}
    \caption{\textbf{Model-family controls - the real-valued baseline suite.} Each real-valued baseline holds a different set of variables constant relative to the native CVNN, isolating one confound at a time. Capacity controls (stacked, parameter-matched, FLOP-matched) test whether a complex model wins simply by carrying more wires or compute. Representation controls (polar, phase-only, magnitude) test whether the right real-valued view solves the task. The $aI + bJ$-constrained real baseline is a correctness witness for algebraic equivalence, not a separate empirical row.}
    \label{fig:model_family}
\end{figure}

\paragraph{Stacked-real baseline.}
The stacked-real baseline receives $(\Re z,\Im z)$ as ordinary real channels. This baseline tests whether native complex arithmetic helps beyond giving a real network direct access to the same Cartesian information.

\paragraph{Parameter-matched real baseline.}
The parameter-matched baseline adjusts the width of the real-valued network so that its parameter count is close to that of the complex model. This controls for the possibility that a complex model wins only because complex weights effectively carry two real degrees of freedom.

\paragraph{FLOP-matched real baseline.}
The FLOP-matched baseline adjusts real-valued capacity to approximately match the computational cost of the complex model. This is important because complex multiplication induces extra real arithmetic relative to a single real multiply-add.

\paragraph{Coordinate-specialized real baselines.}
The polar, phase-only, and magnitude-only models test whether the task can be solved by exposing the right real-valued representation. These baselines are especially important because many apparent CVNN advantages disappear once phase or magnitude is made explicit.

\paragraph{Rotation-equivariant real control.}
A stacked-real convolution constrained to the $aI+bJ$ subspace is mathematically equivalent to the corresponding complex convolution, up to channel-layout conventions. We therefore use this implementation as a correctness witness rather than as a separate empirical row. The substantive comparison is between the complex, equivalently $U(1)$-equivariant, model and unconstrained real baselines that can spend parameters outside the phase-equivariant subspace.

\subsection{Activation set}
\label{sec:activation_set}

The choice of activation function is part of the experimental design rather than an implementation detail. We characterize six complex activations: \texttt{CReLU, ZReLU, ModReLU, ComplexCardioid, Siglog}, and \texttt{ComplexTanh}. Together, they cover the main practical compromises identified by the Liouville trilemma: \textit{non-holomorphic but bounded or controlled activations, meromorphic but unbounded activations, split-channel activations, and magnitude- or phase-gated activations}.

Unless otherwise stated, the default complex activation is CReLU, following
the common deep-CVNN baseline introduced by Trabelsi et al.~\cite{trabelsi2017deep}. However, the RadioML experiments explicitly vary activation choice because the apparent CVNN-vs-real gap is not activation-independent. In particular, \texttt{CReLU, ComplexCardioid}, and \texttt{Siglog} produce large matched-shared-trial gaps under the selected high-learning-rate regime, whereas \texttt{ModReLU} and \texttt{ZReLU} yield smaller or absent gaps. This makes activation choice inseparable from the selection-rule analysis.

\subsection{Stress-test factors}
\label{sec:stress_test_factors}

The synthetic RF suite varies the data-generating mechanism rather than only the model. We include PSK-only, QAM-only, and mixed PSK+QAM tasks; low- and high-SNR regimes; unit-magnitude and unit-power normalization; fixed carrier-phase rotations; random-rotation augmentation; and activation
ablations. These factors test specific hypotheses. PSK should reward phase-aware representations. QAM should reward amplitude-sensitive representations. Fixed carrier-phase rotation should expose whether a trained model has learned, or been given, the relevant invariance. Rotation augmentation tests whether the invariance can be induced from data rather than an architectural imposition.

The quantum and EEG pilots reuse the same logic outside RF. The quantum tasks are constructed so that some labels depend on phase information that is invisible to the wave function, $|\psi|$, while others depend on broader wavefunction structure. The EEG analytic-signal tasks separate phase locking, amplitude events, and phase-amplitude coupling. These pilot studies are not intended as domain-level performance claims; they simply test whether the representation-first approach transfers to other complex-valued signal settings.

\subsection{Selection rules}
\label{sec:selection_rules_protocol}

We report two hyperparameter-selection rules because they answer different
questions, as shown in Figure~\ref{fig:selection-rules}.
\begin{figure}[hbt!]
\centering
\begin{minipage}[t]{0.48\linewidth}
\begin{tcolorbox}[
    colback=blue!5,
    colframe=blue!55,
    boxrule=0.5pt,
    arc=2mm,
    left=5pt,
    right=5pt,
    top=5pt,
    bottom=5pt,
    title=\textbf{Matched-shared-trial selection}
]
Choose the complex model's best validation trial and evaluate every real
baseline at the same trial index.

\vspace{0.4em}
\textbf{Question answered:} Which family is more robust under a shared
hyperparameter allocation?

\vspace{0.4em}
\textbf{Risk:} A large gap may reflect real-baseline instability under a
trial selected by the complex family.
\end{tcolorbox}
\end{minipage}
\hfill
\begin{minipage}[t]{0.48\linewidth}
\begin{tcolorbox}[
    colback=red!5,
    colframe=red!55,
    boxrule=0.5pt,
    arc=2mm,
    left=5pt,
    right=5pt,
    top=5pt,
    bottom=5pt,
    title=\textbf{Independent per-family selection}
]
Choose the best validation trial separately for each model family under
the same search space and budget.

\vspace{0.4em}
\textbf{Question answered:} Which family achieves the best tuned
performance?

\vspace{0.4em}
\textbf{Risk:} It may hide robustness differences that appear under a
shared hyperparameter regime.
\end{tcolorbox}
\end{minipage}
\caption{The two selection rules answer different questions. Matched-shared-trial selection measures robustness to a common trial; independent per-family selection estimates best tuned performance.}
\label{fig:selection-rules}
\end{figure}
\paragraph{Matched-shared-trial selection.}
Under matched-shared-trial selection, every family draws from the same search space and consumes the same trial-by-seed budget. The complex model selects the trial with the best validation performance, and the real baselines are evaluated at the same trial index. This rule is useful as a robustness stress test: \textit{it asks which family performs better under a shared hyperparameter allocation}.

\paragraph{Independent per-family selection.}
Under independent per-family selection, each model family chooses its own best validation trial from the same search space and budget. This rule estimates the best-achievable tuned performance for each family. It is the appropriate rule when the question is peak accuracy rather than robustness to a shared trial.

The distinction is central to the RadioML result. A large matched-shared-trial gap may indicate that the complex family tolerates a hyperparameter regime in which real baselines fail; it does not necessarily imply that the complex architecture has a large independently tuned advantage. For this reason, whenever the selection rule changes the interpretation, we report both.

\subsection{Statistical reporting}
\label{sec:statistical_reporting}

When seeds are matched across model families, we report paired differences between the complex model and each baseline. This is the design-appropriate comparison because the same seeds and trial choices induce correlated outcomes across families. For the main synthetic RF sweep with six seeds, we report per-family confidence intervals using the $t$ distribution and paired $t$ intervals for complex-minus-baseline differences. For smaller pilot studies with three seeds, we treat the results as representation maps rather than strong significance claims and avoid overstating confidence intervals.

We also report dead-seed counts when optimization instability is part of the interpretation. A dead seed is a run that collapses to near-uniform predictions and fails to recover. In the RadioML study, dead-seed counts are essential because they explain why matched-shared-trial and independent per-family selection lead to different conclusions.

\section{Representation Stress Tests}
\label{sec:results_representation}

The evaluation protocol in Section~\ref{sec:protocol} is designed to answer a mechanistic
question: \textbf{When a complex-valued model wins, is it because the task requires complex arithmetic, because the label information lives in phase, because it lives in magnitude, or because the right real-valued coordinate system was not exposed to the baseline?} We therefore begin with representation stress tests before turning to the RadioML selection-rule artifact.

The main result of this section is simple: \textit{the winning model is determined less by whether the input is formally complex-valued than by where the task-relevant information lives}. Across RF modulation, quantum wavefunctions, and EEG analytic signals, phase-dominant tasks reward phase-aware views, amplitude-dominant tasks reward magnitude views, and phase-amplitude coupling favors joint representations.

\subsection{Synthetic RF: PSK rewards phase, QAM rewards magnitude}
\label{sec:synthetic_rf_representation}

The stress tests on RF signals are spotless settings because the data-generating mechanism can be controlled. Figure~\ref{fig:iq-anatomy} shows the core intuition of these tests. \texttt{PSK} symbols lie on the unit circle, so the class information is angular: a magnitude-only representation collapses all classes to the same value, whereas phase separates the constellation points. \texttt{QAM} symbols occupy a two-dimensional grid, so amplitude carries useful ring structure, and phase alone is less decisive.

As shown in Table~\ref{tab:rf-stress}, in the \texttt{PSK}-only condition, the complex model achieves \textbf{0.821}, matching the best score, while magnitude-only remains at chance-level performance. Importantly, the phase-only and stacked-real views also learn. This result does not show that complex arithmetic is the only way to solve \texttt{PSK}, but that phase-aware representations are a necessary component of the task. In the \texttt{QAM}-only condition, the story flips. Magnitude-only is the best single view at \textbf{0.524}, while the complex model reaches \textbf{0.509}, and phase-only falls lower. For mixed \texttt{PSK+QAM}, the complex-valued model has a small edge, consistent with a task that contains both angular and amplitude structure.

\begin{table}[t]
\centering
\small
\setlength{\tabcolsep}{4pt}
\renewcommand{\arraystretch}{1.5}
\begin{tabular}{lcccccc}
\toprule
\textbf{Condition} & \textbf{Best} & \textbf{Complex} & \textbf{Real stack} &
\textbf{Phase} & \textbf{Magnitude} \\
\midrule
PSK-only & \textit{complex} & \textbf{0.821} & 0.728 & 0.735 & 0.333 \\
QAM-only & \textit{magnitude} & 0.509 & 0.502 & 0.487 & \textbf{0.524} \\
Mixed PSK+QAM & \textit{complex} & \textbf{0.507} & 0.481 & 0.487 & 0.365 \\
Low-SNR PSK & \textit{matched real} & \textbf{0.526} & \textbf{0.526} & 0.479 & 0.326 \\
High-SNR PSK & \textit{complex} & \textbf{0.949} & 0.897 & 0.892 & 0.333 \\
Unit-magnitude mixed & \textit{phase} & 0.476 & 0.490 & 0.\textbf{491} & 0.200 \\
Fixed-rotation PSK & \textit{magnitude} & 0.252 & 0.246 & 0.262 & \textbf{0.328} \\
Rotation-augmented PSK & \textit{complex} & \textbf{0.654} & 0.580 & 0.574 & 0.333 \\
\bottomrule
\\
\end{tabular}
\caption{\textbf{Synthetic RF representation stress tests}. Accuracy is mean test
accuracy across the committed stress-test seeds. The central lesson is
conditionality: PSK rewards phase-aware geometry, QAM rewards magnitude,
low SNR weakens phase-only views, and global carrier-phase shifts require
augmentation or an explicit invariance mechanism.}
\label{tab:rf-stress}
\end{table}

The rotation rows are the most important scope check. A complex convolutional layer is equivariant to global phase rotations at the linear-layer level, but the full network is not automatically invariant to an unseen carrier-phase convention. Under a fixed test-time rotation, coordinate-dependent PSK models collapse toward chance. Random rotation augmentation partially restores the complex model, but it does not turn the architecture into a guaranteed phase-invariant classifier. Thus, the RF result supports a conditional claim: \textbf{complex-valued models are useful inductive biases for phase-bearing signal learning when the representation and invariance assumptions match the task}.

\subsection{Quantum wavefunctions: phase can contain physical state information}
\label{sec:quantum_representation}

The quantum pilot tests whether the same representation logic transfers outside RF. The inputs are one-dimensional complex wavefunctions. In the momentum task, examples are Gaussian wavepackets of the form
\[
    \psi(x) = A(x)\exp(i k x + i\phi_0),
\]
and the label is the momentum class. The amplitude is randomized independently of the class, so the class information is carried by the phase gradient rather than by the wave function, $|\psi|$. Table~\ref{tab:quantum-pilot} shows that this design behaves exactly as intended. All phase-aware views solve the momentum task, while magnitude-only remains at chance. This is just the quantum analogue of the \texttt{PSK} result we needed to assert that the signal is not recoverable from amplitude alone.

\begin{table}[t]
\centering
\small
\setlength{\tabcolsep}{5pt}
\renewcommand{\arraystretch}{1.5}
\begin{tabular}{lcccccc}
\toprule
\textbf{Condition} & \textbf{Best} & \textbf{Complex} & \textbf{Real stack} &
\textbf{Phase} & \textbf{Polar} & \textbf{Magnitude} \\
\midrule
Momentum & \textit{phase / complex} & \textbf{1.000} &\textbf{1.000} & \textbf{1.000} & \textbf{1.000} & 0.250 \\
Potential inverse & \textit{real stack} & 0.431 & \textbf{0.462} & 0.405 & 0.418 & 0.331 \\
Global phase shift & \textit{polar} & 0.323 & 0.382 & 0.374 & \textbf{0.428} & 0.331 \\
Global phase augmented & \textit{polar} & 0.395 & 0.392 & 0.362 & \textbf{0.403} & 0.331 \\
\bottomrule
\\
\end{tabular}
\caption{Quantum wavefunction pilot, standard preset with three seeds,
128 examples per class, 96 grid points, and 140 training steps. Chance is
0.25 for momentum classification and 0.20 for potential classification.
The result supports the representation thesis in a physics setting, but not
a broad complex-dominance claim.}
\label{tab:quantum-pilot}
\end{table}

The wavefunction potential-inverse task gives the necessary counterweight to this study. The model must predict one of five potential families after a short split-step evolution, and the best representation is not the native complex-valued model but the stacked-real baseline. The global-phase rows make the same limitation visible again: \textit{a global phase shift is physically irrelevant, but the tested complex stack does not automatically remove it}. A randomized global-phase augmentation partially improves the complex-valued model; still, the polar view remains the strongest. Thus, the result is not \textit{complex models solve quantum states}; it is much narrower and more useful: \textbf{when the physical variable is encoded in phase, magnitude-only representations fail; when the task mixes other structure, explicit real coordinate views can match or surpass the native complex layers}.

\subsection{EEG analytic signals: phase, amplitude, and coupling select different views}
\label{sec:eeg_representation}

The EEG pilot study tests a representation pattern that appears in real biomedical signal processing (the synthetic design is by choice). A real sensor trace can be mapped through a Hilbert or wavelet transform into a complex analytic signal (the magnitude is an amplitude package, and the angle is an instantaneous phase). This makes EEG a natural test of whether the protocol can distinguish phase-dominant, amplitude-dominant, and phase-amplitude coupled tasks.

Table~\ref{tab:eeg-pilot} shows that the pilot design separates these cases without a spot. For phase locking, the label is the phase lag between channels; phase-only features solve the task, and magnitude-only remains at chance. For amplitude events, the label is the channel containing an amplitude burst; magnitude-only is the best view. For phase-amplitude coupling, neither phase-only nor magnitude-only is enough, while the polar view reaches the best accuracy. The two reference-phase rows further show that explicit phase features can remain strong under the tested phase-shift conditions, while the native complex model is competitive but not dominant.

\begin{table}[t]
\centering
\small
\setlength{\tabcolsep}{5pt}
\renewcommand{\arraystretch}{1.5}
\begin{tabular}{lcccccc}
\toprule
\textbf{Condition} & \textbf{Best view} & \textbf{Complex} & \textbf{Real stack} &
\textbf{Phase} & \textbf{Polar} & \textbf{Magnitude} \\
\midrule
Phase locking & \textit{phase} & 0.926 & 0.981 & \textbf{1.000} & 0.971 & 0.250 \\
Amplitude event & \textit{magnitude} & 0.494 & 0.311 & 0.256 & 0.933 & \textbf{0.946} \\
Phase--amplitude coupling & \textit{polar} & 0.308 & 0.343 & 0.240 & \textbf{0.881} & 0.263 \\
Reference phase shift & \textit{phase} & 0.920 & 0.971 & \textbf{0.994} & 0.971 & 0.250 \\
Reference phase augmented & \textit{phase} & 0.891 & 0.984 & \textbf{0.990} & 0.942 & 0.250 \\
\bottomrule
\\
\end{tabular}
\caption{EEG analytic-signal pilot, standard preset with three seeds,
128 examples per class, four channels, 64 time steps, 120 training steps,
and CReLU. Chance is 0.25. The result is a representation map, not a claim
about clinical EEG performance.}
\label{tab:eeg-pilot}
\end{table}

This study strengthens the paper's central framing because it is not an RF task, first of all. Secondly, the same complex-valued input format supports three different answers: \textit{phase-only for phase locking, magnitude-only for amplitude bursts, and polar for phase-amplitude coupling}. A native complex-valued model is not automatically the best representation, simply because the signal is complex-valued.

Across all three domains (RF, quantum wavefunctions, and EEG analytic-signals), the representation stress tests rule out the \textit{complex beats real} interpretation. Complex-valued models help most when the task geometry aligns with the phase-and-amplitude coupling built into complex multiplication. But explicit real coordinate systems can match or outperform them when they expose the relevant variable directly. The appropriate conclusion is therefore conditional: \textbf{complex arithmetic is a useful inductive bias, not a universal substitute for testing where the information actually lives}.
\section{RadioML: When a Large CVNN Gap Becomes a Selection Artifact}
\label{sec:radioml}

The representation stress tests in Section~\ref{sec:results_representation} show when complex-valued models should help: when the label depends on phase, amplitude, or their coupling in a way that aligns with the model's coordinate and symmetry assumptions. We now turn to a different question: \textit{when a large CVNN-vs-real gap appears on real RF data, how much of that gap is architectural, and how much is induced by the hyperparameter-selection rule?}

We study this question on a controlled 3-class subset of RadioML 2018.01A~\cite{8267032}: BPSK, QPSK, and 8PSK at SNR levels $\{-10,-6,-2,2,6,10,14,18\}\ \mathrm{dB}$,
with \texttt{max\_per\_class\_per\_snr=256} and sequence length 128. This subset is intentionally narrower than the full 24-class archive. It keeps the task phase-dominant enough to compare against the synthetic PSK stress tests, while remaining large enough to expose real optimization dynamics. The result is a useful case study: \textbf{a headline complex-valued advantage that looks dramatic under one selection rule becomes much smaller under another}.

\subsection{A large matched-shared-trial gap}
\label{sec:radioml_initial_gap}

Under the \texttt{CReLU} configuration, matched-shared-trial selection produces the result in Table~\ref{tab:radioml-crelu}. The complex model reaches 0.7293 test accuracy, while the best real baseline reaches 0.4999, giving a 22.94 PP gap. Read in isolation, this looks like strong evidence that the complex architecture is substantially better.

\begin{table}[t]
\centering
\small
\setlength{\tabcolsep}{7pt}
\renewcommand{\arraystretch}{1.15}
\begin{tabular}{lccc}
\toprule
\textbf{Family} & \textbf{Test accuracy} & \textbf{Std.} & \textbf{Parameters} \\
\midrule
Complex (CReLU) & \textbf{0.7293} & 0.0085 & 58,886 \\
Real (stacked) & 0.4583 & 0.1394 & 29,891 \\
Real ($\approx$ params) & 0.4999 & 0.1327 & 58,413 \\
Real ($\approx$ FLOPs) & 0.4245 & 0.1415 & 117,123 \\
\bottomrule
\\
\end{tabular}
\caption{RadioML 2018.01A subset under CReLU, using matched-shared-trial selection after a 16-trial by 6-seed sweep. The apparent CVNN-vs-real gap is large, but later ablations show that most of it is induced by real-baseline instability under the matched trial.}
\label{tab:radioml-crelu}
\end{table}

The large standard deviations of the real baselines are the first warning sign. The complex model is stable across seeds, while the real baselines split into successful and failed runs. This suggests that the matched trial selected by the complex family may place the real baselines in an unstable optimization regime.

\subsection{Activation ablation reveals the instability}
\label{sec:radioml_activation_ablation}

To test whether the \texttt{CReLU} gap reflects a general complex-valued advantage, we rerun the same matched-shared-trial protocol across five complex activations. The real baselines continue to use \texttt{ReLU}. Table~\ref{tab:radioml-activation-ablation} summarizes the result, and visualized in Figure~\ref{fig:radioml_activation}.

\begin{table}[t]
\centering
\small
\setlength{\tabcolsep}{5pt}
\renewcommand{\arraystretch}{1.15}
\begin{tabular}{lcccc}
\toprule
\textbf{Activation} & \textbf{Complex} & \textbf{Best real} & \textbf{Gap (pp)} & \textbf{Dead real seeds} \\
\midrule
CReLU & 0.7293 & 0.4999 & +22.94 & 3/9 \\
ComplexCardioid & 0.7282 & 0.5028 & +22.54 & 3/9 \\
Siglog & 0.7014 & 0.4689 & +23.25 & 3/9 \\
ModReLU & 0.6683 & 0.6940 & $-2.58$ & 0/9 \\
ZReLU & 0.7330 & 0.7039 & +2.91 & 0/9 \\
\bottomrule
\\
\end{tabular}
\caption{\textbf{Activation ablation on the RadioML subset under matched-shared-trial selection}. The large 22--23 pp gaps appear exactly in the rows with dead real-baseline seeds. Under ModReLU and ZReLU, where the real baselines remain stable, the gap is small or reversed.}
\label{tab:radioml-activation-ablation}
\end{table}
\begin{figure}[hbt!]
    \centering
    \includegraphics[width=\linewidth]{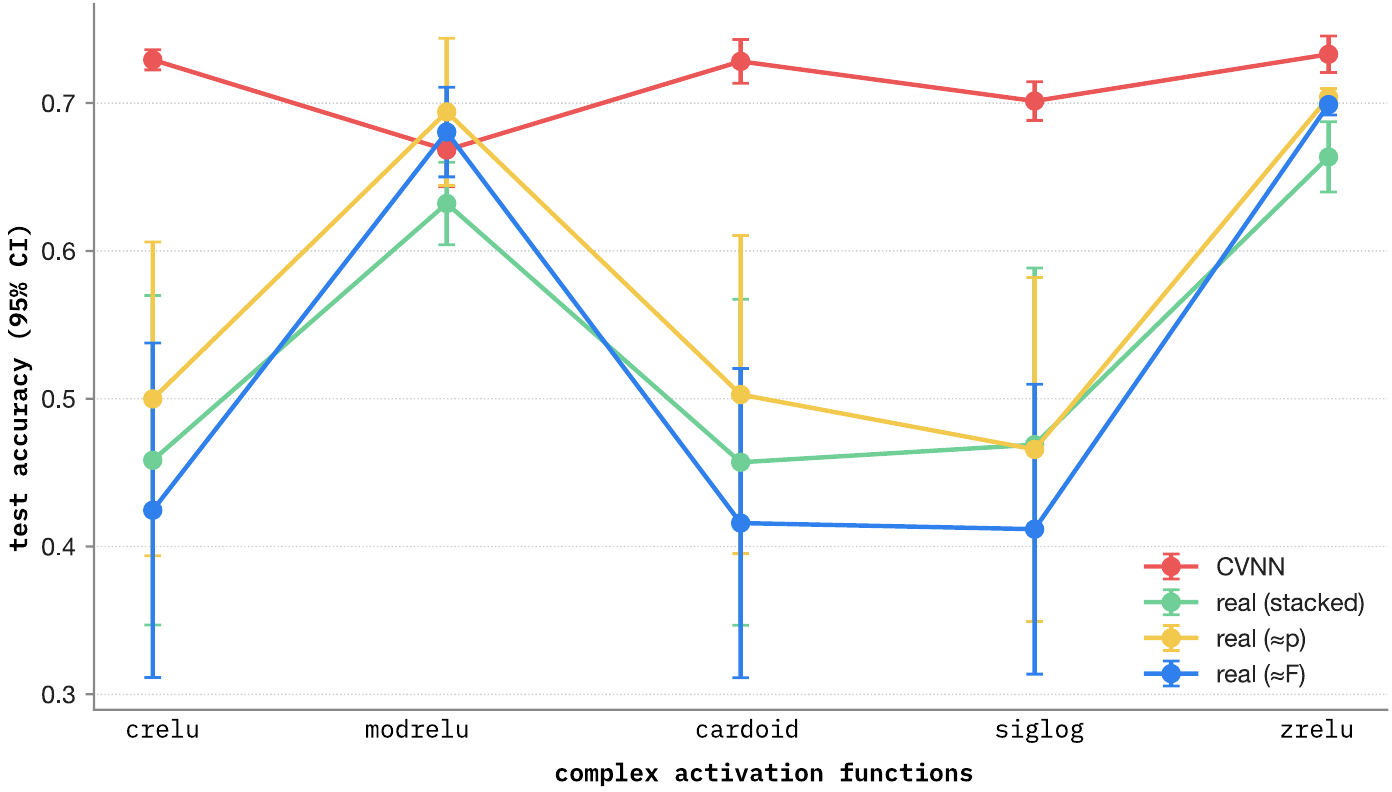}
    \caption{\textbf{RadioML activation ablation}. The apparent CVNN advantage is a real-baseline collapse, not a complex-side gain. Selected test accuracy (mean $\pm$ 95\% CI over seeds, matched-shared-trial selection) versus the complex-side activation, with one line per model family. Real baselines, which use ReLU regardless of the complex activation, nevertheless swing from $\sim0.45$ to $\sim0.70$ depending on which complex activation drives hyper-parameter selection. The CVNN line stays nearly flat (0.668–0.733). The 23 PP gap reported in [Table~\ref{tab:radioml-activation-ablation}] under crelu is therefore an artifact of the shared-trial protocol selecting a high-LR / wide-hidden configuration that the real baseline cannot tolerate, not evidence that complex representations are intrinsically stronger on RadioML.}
    \label{fig:radioml_activation}
\end{figure}
The pattern changes the interpretation. Complex accuracy is relatively stable across activations, but real-baseline accuracy swings by more than 20 points. The large \texttt{CReLU, ComplexCardioid}, and \texttt{Siglog} gaps are therefore not primarily caused by the complex model pulling far ahead. They are caused by the real baselines failing under the matched trial. By contrast, \texttt{ModReLU} and \texttt{ZReLU} produce stable real baselines and yield gaps of only a few points, consistent with the smaller complex-vs-real advantages usually reported in the RadioML literature.

\subsection{Mechanism: a first-step explosion into a dead region}
\label{sec:radioml_mechanism}

The activation ablation identifies an instability but does not explain it. We therefore instrument the selected configurations with per-step telemetry: training loss, total parameter-gradient norm, per-parameter gradient norm, and maximum parameter magnitude. The telemetry covers five activations, four model families, and three seeds, for 60 instrumented trajectories capped at 200 steps, as shown in Figure~\ref{fig:radioml_telemetry}.

\begin{figure}
    \centering
    \includegraphics[width=\linewidth]{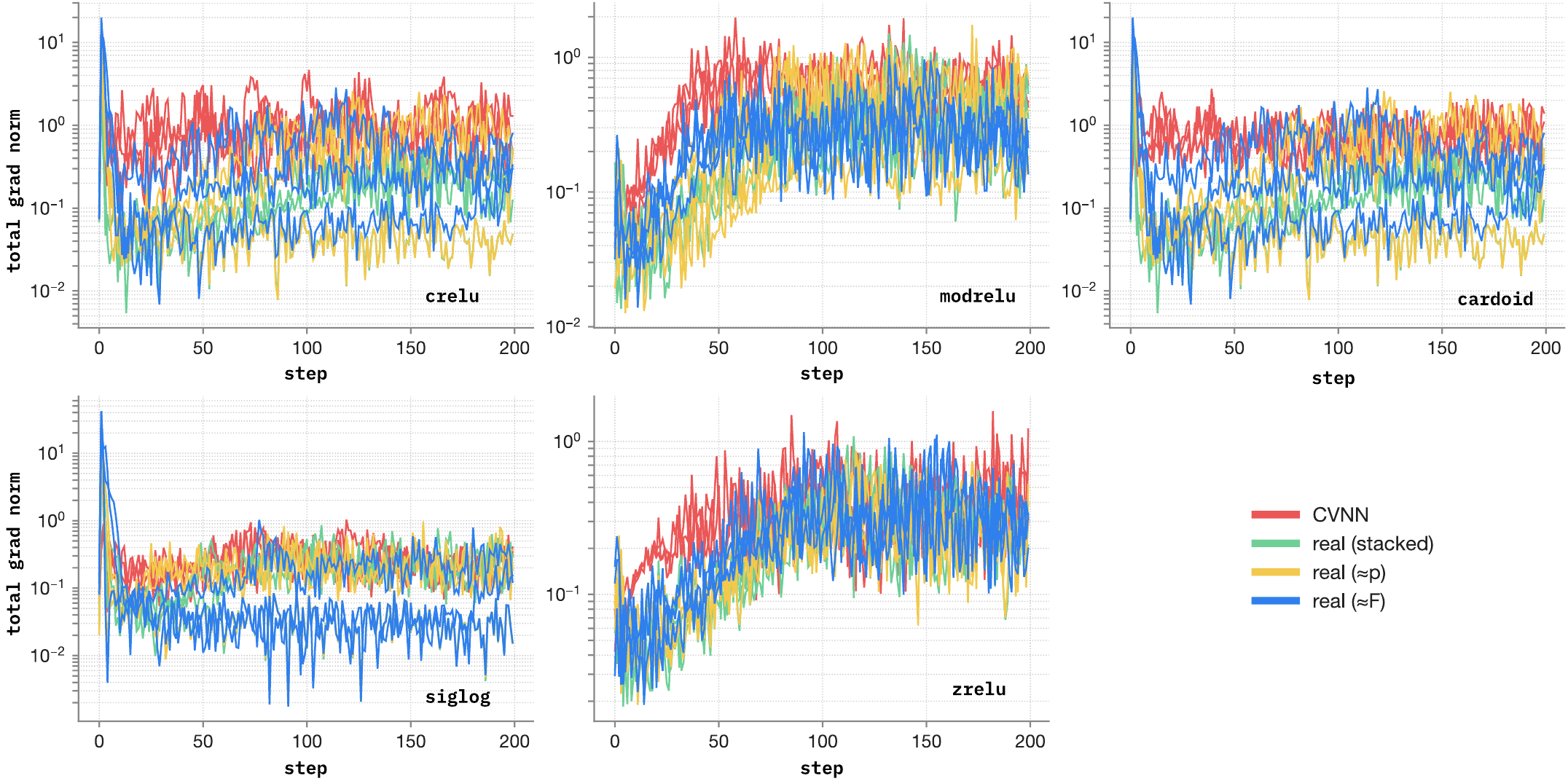}
    \caption{\textbf{Per-step total gradient norm (log scale), one panel per complex activation, 3 seeds × 4 families per panel; first 200 steps shown}. Curves are colored by model family ($\mathbb{C}$VNN vs. matched-parameter / stacked / wide real baselines). Under \texttt{crelu, cardioid}, and \texttt{siglog}, real baselines exhibit a step-1 spike of $10^{1}$–$10^{2}$ in total grad norm before relaxing, the explosion-into-dead-region failure that drives the inflated accuracy gaps in [Table tab:ablation-summary]. Under \texttt{modrelu} and \texttt{zrelu}, no spike appears and the family lines remain co-located throughout training; correspondingly the cross-family accuracy gap collapses to $\pm3$ PP.}
    \label{fig:radioml_telemetry}
\end{figure}

The failure occurs almost immediately. For \texttt{CReLU, ComplexCardioid}, and \texttt{Siglog}, matched-shared-trial selection chooses high-learning-rate, wide-hidden configurations that the complex model tolerates. At the first AdamW step, the real baselines receive a much larger update in the classifier head than the complex model. In the unstable rows, the real models show step-1 losses between roughly 3 and 36, total gradient norms between roughly 6 and 42, and \texttt{head.weight} gradient norms between roughly 6 and 41. The complex model, on the same data and selected trial, remains near ordinary cross-entropy scale, with step-1 losses around 1.07--1.28 and much smaller head gradients.

The dead runs then collapse to uniform prediction. For this 3-class task, uniform prediction gives
\[
    \log 3 \approx 1.099,
\]
and the failed real seeds converge to this loss with near-zero gradient norm and no test-accuracy improvement. In other words, the matched-shared-trial rule selects a hyperparameter regime in which the complex model trains, but some real seeds are pushed into a dead-ReLU region from which they do not recover.

\begin{table}[t]
\centering
\small
\setlength{\tabcolsep}{7pt}
\renewcommand{\arraystretch}{1.15}
\begin{tabular}{lccc}
\toprule
\textbf{Activation} & \textbf{Learning rate} & \textbf{Dead seeds} & \textbf{Step-1 head grad. max} \\
\midrule
CReLU & 0.024 & 3/9 & 13.7 \\
ComplexCardioid & 0.024 & 3/9 & 19.5 \\
Siglog & 0.040 & 3/9 & 40.4 \\
ModReLU & 0.008 & 0/9 & 0.17 \\
ZReLU & 0.0024 & 0/9 & 0.13 \\
\bottomrule
\\
\end{tabular}
\caption{Dead-seed counts for the real baselines under the matched-shared trial selected by each activation. The instability appears when the selected learning rate crosses roughly $2\times 10^{-2}$, producing a large first-step classifier-head gradient.}
\label{tab:dead-seeds}
\end{table}

The key point is not that \texttt{CReLU} is intrinsically unusable. The key point is that the matched-shared-trial rule can select a trial in which the complex family survives, but the real baselines do not. The measured gap then becomes a mixture of architectural performance and optimization robustness.

\subsection{The threshold replicates outside RadioML}
\label{sec:radioml_synthetic_replication}

To check whether the mechanism is specific to RadioML artifacts such as pulse shaping, carrier offset, or channel effects, we rerun the telemetry harness on synthetic AWGN-only IQ data using the same matched-trial hyperparameters. Table~\ref{tab:synthetic-replication} shows that the dead-seed pattern replicates almost exactly.

\begin{table}[t]
\centering
\small
\setlength{\tabcolsep}{5pt}
\renewcommand{\arraystretch}{1.15}
\begin{tabular}{lcccc}
\toprule
\textbf{Activation} & \textbf{RadioML dead/9} & \textbf{Synthetic dead/9} & \textbf{RadioML max grad.} & \textbf{Synthetic max grad.} \\
\midrule
CReLU & 3 & 3 & 19.9 & 34.1 \\
ComplexCardioid & 3 & 3 & 19.9 & 34.1 \\
Siglog & 3 & 2 & 42.1 & 55.7 \\
ModReLU & 0 & 0 & 0.3 & 0.5 \\
ZReLU & 0 & 0 & 0.2 & 0.7 \\
\bottomrule
\\
\end{tabular}
\caption{Synthetic-data replication of the RadioML instability. Applying the same matched-shared-trial hyperparameters to AWGN-only synthetic IQ data reproduces the same dead-seed pattern, indicating that the mechanism is hyperparameter-driven rather than RadioML-specific.}
\label{tab:synthetic-replication}
\end{table}

This replication narrows the explanation. The instability is not caused by a peculiarity of the RadioML loader, channel model, or SNR subset. It is a general interaction between the matched trial, the selected learning rate, and the readout dynamics of the real baselines.

\subsection{Learning rate, not activation identity, is the dominant axis}
\label{sec:radioml_lr_factorial}

The dead-seed table might suggest that some activations are intrinsically unstable while others are intrinsically stable. But the unstable rows also select much larger learning rates. To disambiguate activation identity from learning rate, we run a $2\times2$ factorial on synthetic AWGN data:
\[
    \{\mathrm{CReLU}, \mathrm{ZReLU}\}
    \times
    \{0.0236, 0.0024\},
\]
holding all other hyperparameters fixed at the corresponding matched-trial configuration.

\begin{table}[t]
\centering
\small
\setlength{\tabcolsep}{7pt}
\renewcommand{\arraystretch}{1.15}
\begin{tabular}{llcc}
\toprule
\textbf{Cell} & \textbf{Activation / LR} & \textbf{Dead seeds} & \textbf{Step-1 head grad. max} \\
\midrule
Matched high-LR & CReLU / 0.0236 & 4/9 & 31.4 \\
Swap low-LR onto CReLU & CReLU / 0.0024 & 0/9 & 2.3 \\
Matched low-LR & ZReLU / 0.0024 & 0/9 & 0.3 \\
Swap high-LR onto ZReLU & ZReLU / 0.0236 & 1/9 & 6.3 \\
\bottomrule
\\
\end{tabular}
\caption{Learning-rate/activation factorial on synthetic AWGN data. The failure follows learning rate more strongly than activation identity: lowering CReLU to the ZReLU learning rate removes dead seeds, while raising ZReLU to the CReLU learning rate introduces the first dead seed.}
\label{tab:lr-factorial}
\end{table}

The factorial makes the mechanism clear. Learning rate is the dominant axis; activation is a secondary modulator. \texttt{CReLU} is more sensitive at the same high learning rate, but it does not fail at the safer low learning rate. Conversely, \texttt{ZReLU} is stable at its selected low learning rate, but begins to fail when forced into the high-learning-rate regime. The apparent activation story is therefore mostly a selection story.

\subsection{Independent selection collapses the headline gap}
\label{sec:radioml_selection}

The practical consequence is shown in Table~\ref{tab:selection-collapse}. On the same RadioML subset, with the same 16-trial search box, the \texttt{CReLU} complex gap drops from 22.94 PP under matched-shared-trial selection to 2.46 PP under independent per-family tuning.

\begin{table}[t]
\centering
\small
\setlength{\tabcolsep}{8pt}
\renewcommand{\arraystretch}{1.15}
\begin{tabular}{lccc}
\toprule
\textbf{Selection rule} & \textbf{Complex} & \textbf{Best real} & \textbf{Gap} \\
\midrule
Matched-shared-trial & 0.7293 & 0.4999 & +22.94 PP \\
Independent per-family & 0.7293 & 0.7047 & +2.46 PP \\
\bottomrule
\\
\end{tabular}
\caption{Selection-rule contrast on the RadioML CReLU configuration. The same data and search space produce a large gap under matched-shared-trial selection but a small gap under independent per-family selection.}
\label{tab:selection-collapse}
\end{table}

This does not mean the matched-shared-trial result is invalid. It answers a different question: \textbf{which family is more robust when forced to share the complex model's selected hyperparameter trial?} Under that question, the complex model genuinely performs better. But it should not be reported as a peak-performance architectural advantage. The independent-selection result shows that once each family avoids its unstable regimes, the complex advantage is much smaller.

The RadioML result is therefore best read as a methodological finding. A large CVNN-vs-real gap can be manufactured by the interaction of matched-shared-trial selection, high learning rate, and real-baseline readout instability. The complex model's robustness is real, but it is not the same as a large, independently tuned accuracy advantage.

For future CVNN benchmarking, we recommend reporting both selection rules. Matched-shared-trial selection should be framed as a robustness stress test; independent per-family selection should be used as the peak-performance estimate. When the two disagree, dead-seed counts and per-step gradient telemetry should be reported before interpreting the gap as architectural.
\section{Discussion}
\label{sec:discussion}

The experiments support a narrower and more useful claim than a generic "\textit{complex-valued networks outperform real-valued networks}" conclusion. The complex-valued models are best understood as structured inductive biases for data with phase and amplitude. They thrive when the task geometry aligns with complex number multiplication operation; they can be matched or surpassed by explicit real-valued coordinate systems, and additionally, their advantage can be amplified by hyperparameter-selection rules.

\subsection{What complex-valued models buy}
\label{sec:what_complex_buys}

CVNNs first support representation alignment. Many classes of signals decompose naturally into amplitude and phase. Quantum wavefunctions, RF IQ samples, analytic biomedical signals, and Fourier domain measurements all possess structures that are awkward to describe using only a scalar real-valued channel. These quantities are coupled in a single algebraic structure via a complex representation. 

The second advantage is the symmetry-constrained parameterization. In Proposition~\ref{prop:complex_equivariance}, we show that complex multiplication is equivalent to a two-channel real map restricted to the subspace $aI+bJ$. This restriction is not an additional expressivity, but a structured reduction of expressivity. A generic stacked-real layer can learn both phase-equivariant and phase-non-equivariant interactions between channels. A complicated layer only retains the phase-equivariant component. When the task respects this geometry, the constraint can improve generalization by removing irrelevant degrees of freedom.

The third benefit is some shared hyperparameter regimes under optimization robustness. The RadioML study in Section~\ref{sec:radioml} shows that the complex
family tolerates high-learning-rate trials that push some real baselines into
dead regions. This is a real benefit under matched-shared-trial selection, but it is not the same as a large independently tuned accuracy benefit. It should thus be reported as a robustness to a shared trial, not as an unqualified superiority of architecture.

\subsection{What complex-valued models do not buy}
\label{sec:what_complex_does_not_buy}

The experiments also suggest a few things that complex-valued models do not automatically provide

First, complex-valued inputs do not mean complex-valued networks are the best option. In the QAM-only stress test, the strongest single-coordinate view is magnitude. Magnitude-only features outperform the native complex model in the EEG amplitude-event task. The stacked-real baseline outperforms the complex model in the quantum potential inverse task. These aren’t protocol failures; they are the point of the protocol. They indicate that the relevant variable could be magnitude, phase, Cartesian structure, or phase-amplitude coupling, depending on the task.

Second, complex layers do not provide end-to-end phase invariance. The linear complex layer is equivariant to global phase rotations in the bias-less case, but the full network involves biases, nonlinearities, pooling, readout choices, and training data. The fixed-rotation RF and quantum rows show that performance can degrade due to unobserved global phase shifts. If invariance to carrier phase or global phase is desired, it must be either tested directly or built in through architecture, augmentation, or the output representation.

Third, complex arithmetic does not obviate the need for fair real-valued baselines. The complex input sees the same information as a real model with Cartesian coordinates. A polar model might perhaps reveal the relevant variables more directly. A parameter-matched or FLOP-matched model controls for capacity and compute. Without these comparisons, a CVNN win is ambiguous; it could be a matter of representation, symmetry, number of parameters, optimization, or selection.

\subsection{Selection rules answer different scientific questions}
\label{sec:selection_rules_discussion}

The RadioML result (Section~\ref{sec:radioml}) explains why the choice rule has to be included in the claim. Both matched-shared-trial selection and independent per-family selection are valid, but they estimate different quantities. Matched-shared-trial selection asks if a family is robust with a shared hyperparameter assignment. The complex model can win with this rule because it survives a trial that destabilizes real baselines. This is useful when the experimental question is robustness under limited tuning. Independent per-family selection asks which family performs best tuned under the same search budget. This rule enables each family to avoid failure-inducing hyper-parameter regimes. This is the more appropriate estimate for the peak performance of the model family.

The difference between the two rules is in itself diagnostic. If the CVNN advantage is large in the case of matched-shared-trial selection but small in the case of independent selection, this should be interpreted as a robustness or optimization effect. If the advantage survives both rules, then the evidence for an architectural or representational advantage is stronger. This distinction is particularly important in CVNN benchmarking, where matched trials are often used to ensure apparent fairness but can inadvertently select a regime that is advantageous for one family and unstable for another.

\subsection{Practical recommendations for CVNN evaluation}
\label{sec:practical_recommendations}

The results suggest the following practice of evaluation for future comparisons of CVNNs.

First, report coordinate-view baselines.  Wherever the domain allows them, a native complex model should be compared to Cartesian real, polar, phase-only, and magnitude-only views. These baselines classify the task as phase dominant, amplitude dominant, or phase-amplitude coupled.

Second, report capacity and compute controls.  To compare with a complex layer, the real baseline needs to be matched in parameters and FLOPs because a complex layer influences the number of real degrees of freedom and the FLOP count per operation.

Third, whenever you can, give both selection rules. Matched-shared-trial selection should be seen as a robustness stress test. The peak-tuned comparison should be reported separately for each family selection. If there are big differences between the two, consider them a result, not a nuisance.

Fourth, when gaps are large, report instability diagnostics. Dead seed counts, per-step gradient norms, and early training losses can distinguish a genuine architectural advantage from a baseline collapse. For the RadioML experiment, these diagnostics are vital: without them, the 22.94 percentage-point matched-trial gap would be easily misread as a large architectural win.

Test invariance finally directly. If robustness to global phase, carrier offset, or phase convention is a claim, the benchmark should include fixed-rotation and augmentation stress tests. Layer-wise equivariance is not sufficient for end-to-end invariance.

\subsection{Scope of the claim}
\label{sec:scope_of_claim}

The strongest claim supported by this paper is not that complex-valued networks are universally better than real-valued networks. The supported claim is conditional:

\begin{quote}
Complex-valued networks help when their representation, symmetry constraint, and optimization behavior align with the task. Their advantage is largest when phase and amplitude are both meaningful and when the $U(1)$-equivariant constraint removes irrelevant real-valued degrees of freedom. Their advantage shrinks or disappears when explicit real coordinate views expose the relevant signal, or when independently tuned real baselines avoid unstable hyperparameter regimes.
\end{quote}

This framing makes the results falsifiable. A new domain should not be classified as \textit{complex} and assumed to favor CVNNs. Instead, the evaluation should ask where the label information lives, which symmetry should be preserved, whether the desired invariance is actually achieved, and whether the comparison survives both matched and independently tuned selection rules.

\section{Limitations}
\label{sec:limitations}

This study is intentionally mechanism-focused, and its scope is narrower than a full benchmark claim.

\paragraph{RadioML subset rather than the full archive.}
The real-data study uses a controlled 3-class subset of RadioML 2018.01A: BPSK, QPSK, and 8PSK across eight SNR levels. This subset is useful because it is phase-dominant and exposes the selection-rule artifact clearly. However, it does not support claims about the full 24-class RadioML archive, especially amplitude-heavy classes such as QAM. A full-scale run over all modulation types and SNRs is necessary before making broad RF application claims.

\paragraph{Synthetic QAM is tested, but real QAM is not.}
The synthetic RF stress tests include QAM-only and mixed PSK+QAM settings, where magnitude and phase--amplitude structure matter. The RadioML mechanism study, however, remains PSK-focused. This means the paper can claim that QAM-like settings require different coordinate views, but it cannot yet claim that the same behavior holds on real RadioML QAM classes.

\paragraph{No guaranteed end-to-end phase invariance.}
The analysis shows that complex convolution imposes a $U(1)$-equivariant linear constraint, but the full network is not automatically invariant to global phase shifts. Biases, nonlinearities, pooling, readout choices, and training distribution all affect end-to-end invariance. The fixed-rotation stress tests show that unseen carrier-phase or global-phase shifts can still degrade performance. Future work should test architectures that impose invariance more directly, rather than relying only on complex arithmetic.

\paragraph{Application pilots are small by design.}
The quantum and EEG experiments are representation probes, not domain benchmarks. The quantum pilot uses one-dimensional synthetic wavefunctions and a limited set of potential families. The EEG pilot uses synthetic analytic signals rather than clinical or large-scale neurophysiological recordings. These pilots support the transfer of the representation-first protocol beyond RF, but they do not establish state-of-the-art results in physics or biomedical machine learning.

\paragraph{Activation and architecture coverage is limited.}
The experiments study a compact one-dimensional convolutional architecture and a finite set of common complex activations. They do not cover transformer-style architectures, recurrent complex models, two-dimensional complex convolutions, MRI-scale reconstruction models, or learned invariance mechanisms. The conclusions should therefore be read as evidence about the mechanisms tested here rather than as an exhaustive statement about all CVNN designs.

\paragraph{The optimization artifact is diagnostic, not universal.}
The RadioML selection-rule artifact is real in our setting, but we do not claim that all matched-shared-trial CVNN results are inflated. The mechanism depends on the interaction between the selected learning rate, activation behavior, readout gradients, and real-baseline instability. The broader recommendation is methodological: when a large CVNN-vs-real gap appears under a shared-trial protocol, it should be checked against independent per-family tuning and early-step optimization diagnostics.
\section{Conclusion}
\label{sec:conclusion}

Since many signals are inherently complex-valued, CVNNs are frequently justified. This research demonstrates that where the task-relevant information resides and what inductive bias the model imposes are more crucial questions than whether the input is complex. Phase-dominant tasks reward phase-aware views, amplitude-dominant tasks reward magnitude views, and phase-amplitude-coupled tasks promote joint representations across synthetic RF, quantum wavefunctions, and EEG analytic signals. Although they do not dominate explicit real coordinate systems in all regimes, native complex models are useful when their algebraic structure corresponds to this geometry.

The structural reason is that complex multiplication is a constrained real operation. A complex one-dimensional convolution is equivalent to a stacked-real two-channel convolution whose kernel taps are restricted to the $aI+bJ$ subspace, the $U(1)$-equivariant part of the full real channel-mixing space. This constraint can improve generalization when the task respects phase geometry, but it does not provide extra expressivity. It is a symmetry-aligned reduction of expressivity.

The RadioML study adds a second lesson: an evaluation protocol can change the apparent size of a CVNN advantage. On a controlled 3-class RadioML 2018.01A subset, a 22.94 percentage-point CReLU gap under matched-shared-trial selection collapses to 2.46 points under independent per-family tuning. Gradient telemetry and a learning-rate/activation factorial trace the larger gap to high-learning-rate real-baseline failures rather than to a stable architectural advantage.

The resulting recommendation is simple. CVNN-vs-real comparisons should report coordinate-view baselines, capacity and compute controls, both matched-shared-trial and independent per-family selection, and instability diagnostics when gaps are large. Complex-valued models are valuable tools for phase- and amplitude-bearing data, but their advantages are conditional. They should be interpreted through representation, symmetry, and optimization rather than through the datatype alone.


\bibliographystyle{unsrt}  
\bibliography{references}

\end{document}